\tikzstyle{vertex}=[circle, draw, inner sep=0pt, minimum size=6pt]
\title{Transforming Geospatial Ontologies by Homomorphisms}
\author[a,*]{Xiuzhan Guo}
\author[a]{Wei Huang}
\author[a]{Min Luo}
\author[a]{Priya Rangarajan}
\affil[a]{\small
Chief Data Office,
Royal Bank of Canada,
181 Bay St.,
Toronto, ON M5J 2V1,
Canada
}
\affil[*]{\small
Corresponding author: xiuzhan@gmail.com
}
\date{}
\newcommand{\rw}{\rightarrow}
\newtheorem{theorem}{Theorem}[section]    
\newtheorem{corollary}[theorem]{Corollary}   
\newtheorem{preremark}[theorem]{Remark}   
\newtheorem{prexample}[theorem]{Example}   
\newtheorem{proposition}[theorem]{Proposition}
\newtheorem{definition}[theorem]{Definition}
\newenvironment{example}{\begin{prexample}\rm}{\end{prexample}}
\begin{document}









\maketitle

\begin{abstract}
In this paper, 
we study the geospatial
ontologies that we are interested in 
together as a geospatial  ontology system, consisting of a set of the geospatial ontologies
and a set of geospatial ontology operations, without any  internal details of the geospatial ontologies 
and their operations being needed, algebraically. 
A homomorphism between two geospatial ontology systems is 
a function between two sets of geospatial ontologies in the systems, which preserves
the geospatial ontology operations. We view
clustering a set of the ontologies as partitioning the set or defining an equivalence relation on the set
or forming a quotient set of the set
or obtaining the surjective image of the set. 
Each geospatial ontology system homomorphism can be factored as a surjective clustering to a quotient space,
followed by an embedding.
Geospatial ontology merging systems, natural partial orders on the systems, and geospatial ontology merging closures in the systems 
are then transformed under geospatial  ontology system homomorphisms that are given by quotients and embeddings.
\end{abstract}

\begin{keywords}
Equivalence relation, quotient, surjection, injection, clustering, embedding,
geospatial ontology, geospatial ontology merging system, homomorphism, natural partial order, merging closure
\end{keywords}

\maketitle


\pagenumbering{arabic}

\section{Introduction}
An {\em ontology} was considered as an explicit specification of a conceptualization that provides the ways of thinking about a domain \cite{g}.
Ontologies are the silver bullet for many applications, such as, database integration, peer to peer systems, e-commerce, etc. \cite{es}.
A {\em geospatial ontology} is an ontology that implements a set of geospatial entities in a hierarchical structure 
\cite{cl,  drgb,  szphwls, szs}.

In the age of artificial intelligence, geospatial data, from multiple platforms with many different types,
not only is big, heterogeneous, connected, but also keeps changing continuously,
which results in tremendous potential for dynamic relationships.
Geospatial data, ontologies, and models must be robust enough to the dynamic changes.

After mathematical operations, e.g., $+$, $-$, $\times$, and $\div$, being introduced,
natural numbers can be used not only to count but also to solve real life problems.
The set of natural numbers, along with the operations, forms an algebraic system
that can be studied by its properties without any internal details of the numbers and operation.
These operations establish the relations among natural numbers, which make more sense than isolated natural numbers.
Geospatial ontologies are not isolated  but connected by their relations.
For example, an ontology of Ontario climate data entities can be viewed as a directed subgraph of Canada digital twin knowledge graph,
the data management ontology of Canada digital twin data is a super ontology of Ontario farm data ontology,
etc.
Geospatial ontologies can be aligned, matched, mapped, merged, and transformed and so they are linked by these operations.
Relations between the ontologies, given by the operations, may make more sense 
than the single isolated ontologies.
In this paper, we shall assume that the geospatial ontologies that we are interested in, can be viewed as a set of entities and their relations that carry certain algebraic structures and make more sense.
We shall collect the ontologies together
as a set $\mathfrak{G}$, along with a set $P$ of their operations that give rise to their relations, called 
{\em a geospatial ontology system} $(\mathfrak{G},P)$. 

Recall that a {\em directed graph}, the mathematical concept to model entities and their pairwise relations, consists of a set of 
{\em nodes} (or {\em vertices}) and a set of {\em edges} (or {\em arrows}), given by an ordered pair of nodes.
It has been shown that relations can be queried, updated, computed, analyzed, and visualized efficiently 
and provide the robustness to the models in a graph setting.

A geospatial ontology, viewed as a set of geospatial ontologies and their relations, 
can be represented as a knowledge graph so that it, along with knowledge graph computing capabilities, 
provides an efficient setting to
align, integrate, transform, update, query, compute, analyze, and visualize the geospatial ontologies.
However, due to its complexity and size, the geospatial data is unlikely to be
entirely modeled by one single ontology or knowledge graph. 
To tackle such a big dynamic data or ontology, we group or summarize it at multiple layers or dimensions.

In ${\bf Sets}$, grouping objects (elements) amounts to {\em clustering} or {\em partitioning} them, which turns out to be equivalent to an {\em equivalence relation} that produces a {\em quotient set},
a surjective function, and an injective function, where injection (sub object) and surjection ({\em quotient object}) are the dual concepts.
Each function can factor through a quotient set, followed by an injection ({\em embedding}).
In this paper, we shall introduce equivalence relation, quotient space, embedding to geospatial ontology systems,
study how geospatial ontologies are transformed under geospatial
ontology system homomorphisms, each of which can be viewed as a quotient surjection, followed by an embedding.

Ontologies and ontology operations, e.g., aligning and merging, are studied and implemented extensively in different settings, such as, 
categorical operations \cite{aa, ch, cmk14, cmk17, hw, keg, map, zkeh}, relation algebras \cite{e}, 
typed graph grammars \cite{mtfh}.
In this paper, we shall group
the geospatial ontologies and their operations
without any internal details of the ontologies and
the operations being needed in any specific setting but we shall utilize the generic algebraic
properties they share, to study the geospatial ontologies algebraically.

The paper proceeds as follows: 
First, in Section \ref{sect:prelim}, we recall the basic notions and notations of a binary relation in ${\bf Sets}$, such as, an equivalence relation, a partition,
a quotient set, a projection, a kernel, an embedding, etc.

In Section \ref{sect:quotient}, we consider the geospatial ontologies that we are interested in, collectively as a set and
cluster or partition them as a quotient set, which will also produce a surjective homomorphism.

In Section \ref{sect:embedding}, 
we model the set of geospatial ontologies and their operations as {\em a geospatial ontology system}.
A {\em homomorphism} between geospatial ontology systems, a function between the systems preserving the operations, 
is factored through the quotient geospatial ontology system,
followed by an embedding.

In \cite{gbkbl}, Guo et al. introduced ontology merging systems, the natural partial order on the systems, and the merging closure of 
an ontology repository and studied the properties shared algebraically without any internal details.
In Sections \ref{sect:merging}, \ref{sect:order}, and \ref{sect:closure}, we transform the geospatial ontology merging systems,
the natural partial order on the systems, and the merging closure of a geospatial ontology repository using 
geospatial ontology merging system homomorphisms that amount to
quotients and embeddings, respectively. 
Finally, we complete the paper with our concluding remarks in Section \ref{sect:concl}.

\section{Preliminaries}\label{sect:prelim}
In this section, we recall the basic notations, concepts, and results of binary relations,
equivalence relations, partitions, and quotients on a nonempty set or a directed graph.

Given a nonempty set $S$, a {\em binary relation} on $S$ is a subset $\rho\subseteq S\times S$,
where $S\times S=\{(s_1,s_2)\;|\;s_1,s_2\in S\}$ is the {\em Cartesian product} of $S$ and $S$.
The {\em inverse relation} of $\rho$ is the relation 
$$\rho^{-1}\stackrel{\text{\tiny def}}{=}\{(s_2,s_1)\;|\;(s_1,s_2)\in \rho\}\subseteq S\times S.$$
If $\rho$ and $\sigma$ are two binary relations on $S$,  
$$\rho\sigma\stackrel{\text{\tiny def}}{=}\{(s_1,s_3)\;|\;(s_1,s_2)\in\rho, (s_2,s_3)\in\sigma\}\subseteq S\times S.$$
A binary relation $\rho$ on $S$ is called {\em reflexive} if $(s,s)\in\rho$ for all $s\in S$, {\em symmetric} if $\rho^{-1}=\rho$,
and {\em transitive} if $\rho\rho=\rho$.
An {\em equivalence relation} on $S$ is a reflexive, symmetric, and transitive binary relation on $S$. 
Clearly, $\Delta_{S}=\{(s,s)\;|\;s\in S\}$ and $S\times S$ are equivalence relations on $S$.

For a binary relation $\phi$ on $S$, the {\em transitive closure} $\phi^t$ of $\phi$ is the smallest binary relation on $S$, which contains
$\phi$ and is transitive. Since $S\times S$ is transitive and contains $\phi$, $\phi^t$ always exists 
and $\phi^t=\cup_{i=1}^{+\infty}\phi^i$, which can be computed efficiently when $|S|<+\infty$.

A function $f:S\rw T$ is an {\em injection} or a {\em monomorphism}
if for all set $X$ and functions $g_1,g_2:X\rw S$, $fg_1=fg_2$ implies $g_1=g_2$.
The {\em dual} concept of an injection (a monomorphism) is a {\em surjection} (an {\em epimorphism}).

Let $f:S\rw T$ be a function and let $\kappa_f\subseteq S\times S$ be such that
$$(s_1,s_2)\in \kappa_f\text{ if and only if }f(s_1)=f(s_2).$$ 
Then $\kappa_f$ is an equivalence relation on $S$, called the {\em kernel} of $f$.

If $\rho$ and $\sigma$ are equivalence relations on $S$ and $T$, respectively, then the {\em image} of $\rho$ under $f$:
$$f\rho\stackrel{\text{\tiny def}}{=}\{(f(s_1),f(s_2))\,|\,(s_1,s_2)\in \rho\}$$
and the {\em inverse image} of $\sigma$ under $f$:
$$f^{-1}\sigma\stackrel{\text{\tiny def}}{=}\{(s_1,s_2)\in S\times S\,|\,(f(s_1),f(s_2))\in \sigma\}$$
are equivalence relations on $f(S)$ and $S$, respectively. Obviously, $\kappa_f=f^{-1}(\Delta_T)$.

A {\em partition} of $S$ is a set ${\mathcal P}_S$ of subsets $S_i\subseteq S$ such that
$$\text{each } S_i\neq \emptyset, S_i\cap S_j=\emptyset \text{ for all distinct }S_i,S_j\in {\mathcal P}_S, \text{ and }S=\cup_{S_i\in {\mathcal P}_S}S_i.$$

Given an equivalence relation $\rho$ on $S$ and $s\in S$,
the subset $[s]_{\rho}=\{a\;|\;a\in S, (s,a)\in \rho\}$ is called the {\em equivalence class} of $s$ with respect to $\rho$.
Each equivalence relation $\rho$ on $S$ partitions $S$ into the set of all equivalence classes with respect to $\rho$,
called the {\em quotient set} or {\em quotient} of $S$ with respect to $\rho$, denoted by $S/\rho$.

Conversely, each partition ${\mathcal P}_S$ of $S$ gives rise to an equivalence relation $\rho_{{\mathcal P}_S}$, 
whose quotient set is ${\mathcal P}_S$, where $(s_1,s_2)\in \rho_{{\mathcal P}_S}$ if and only if there is 
$S_i\in {\mathcal P}_S$ such that $(s_1,s_2)\in S_i\times S_i$. 

There is a {\em canonical projection} $\pi_{\rho}:S\rw S/\rho$, sending $s$ to its equivalence class $[s]_{\rho}$,
which is surjective. Obviously, $S/\Delta_S= S$ and $S/(S\times S)=\{S\}$. 
Equivalence relations, partitions, quotients, and surjective images are equivalent in ${\bf Sets}$ 
and so they are interpreting the same thing. Therefore, the results of the operations on equivalence relations (e.g., in \cite{BMP})
can be mapped to clusters, partitions, quotients, and surjective images.

\begin{proposition}\label{pro:equ_partition}
Given a nonempty set $S$,
the set $\mathbb{E}_S$ of all equivalence relations on $S$, 
the set $\mathbb{P}_S$ of all partitions of $S$, 
the set $\mathbb{Q}_S$ of all quotients of $S$,
and the set $\mathbb{I}_S$ of all surjective images of $S$ are isomorphic in ${\bf Sets}$, namely, there exist the bijections between them.
\end{proposition}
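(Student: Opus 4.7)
The plan is to arrange the four sets into a cycle of maps $\mathbb{E}_S \rw \mathbb{P}_S \rw \mathbb{Q}_S \rw \mathbb{I}_S \rw \mathbb{E}_S$ and verify that every consecutive pair is a bijection whose inverse is produced by the next step around the cycle. First I would define $\alpha: \mathbb{E}_S \rw \mathbb{P}_S$ by sending an equivalence relation $\rho$ to its set of equivalence classes $\{[s]_{\rho}\,|\,s\in S\}$, which is a partition because of reflexivity (nonemptiness and covering) together with symmetry and transitivity (disjointness of distinct classes). Its proposed inverse is the construction $\mathcal{P}_S\mapsto \rho_{\mathcal{P}_S}$ already recorded in Section \ref{sect:prelim}; I would check both composites are identities by a straightforward element chase, invoking the partition axioms in one direction and the three properties of equivalence in the other.

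Next I would observe that the passage from a partition to a quotient is essentially tautological, since $S/\rho$ is by definition the set of equivalence classes of $\rho$. The map $\mathbb{P}_S \rw \mathbb{Q}_S$, $\{[s]_{\rho}\,|\,s\in S\}\mapsto S/\rho$, is thus a relabeling and is manifestly a bijection. No real content lies here beyond unwinding the definitions in the preliminaries.

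For the remaining bijection $\mathbb{Q}_S \rw \mathbb{I}_S$, I would send $S/\rho$ to the surjective image determined by the canonical projection $\pi_{\rho}: S\rw S/\rho$, whose image is exactly $S/\rho$. Conversely, given a surjection $f:S\rw T$ representing an element of $\mathbb{I}_S$, I would take its kernel $\kappa_f$ and form $S/\kappa_f\in \mathbb{Q}_S$; the standard argument that the induced assignment $[s]_{\kappa_f}\mapsto f(s)$ is a well-defined bijection of $S/\kappa_f$ with $T$ shows that this correspondence inverts the projection construction and also delivers the closing map $\mathbb{I}_S \rw \mathbb{E}_S$, $f\mapsto \kappa_f$.

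The main obstacle is disambiguating $\mathbb{I}_S$: two surjections $S\rw T$ and $S\rw T'$ with different targets but the same ``partition of fibers'' should represent the same surjective image. I would make this precise by declaring $\mathbb{I}_S$ to consist of surjections out of $S$ modulo post-composition with a bijection on the target, or equivalently by identifying a surjective image with its underlying family of fibers. Under either convention, the kernel assignment $f\mapsto \kappa_f$ descends to a well-defined map on equivalence classes, and the first-isomorphism-style argument above confirms that this map is inverse to $\rho \mapsto \pi_{\rho}$, closing the cycle and yielding the four claimed bijections simultaneously.
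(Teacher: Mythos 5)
Your argument is correct, and it is essentially the only argument available: the paper itself offers no proof of this proposition, stating it as recalled background after the one-line remark that equivalence relations, partitions, quotients, and surjective images ``are interpreting the same thing.'' Your cycle $\mathbb{E}_S \rightarrow \mathbb{P}_S \rightarrow \mathbb{Q}_S \rightarrow \mathbb{I}_S \rightarrow \mathbb{E}_S$, with the mutually inverse constructions $\rho \mapsto \{[s]_\rho\}$ and $\mathcal{P}_S \mapsto \rho_{\mathcal{P}_S}$, the tautological identification of a partition with the quotient set, and the first-isomorphism-style correspondence between $S/\kappa_f$ and the image of a surjection $f$, is the standard proof and checks out.

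The one place where you add genuine content beyond what the paper records is your treatment of $\mathbb{I}_S$, and this is worth keeping: taken literally, ``the set of all surjective images of $S$'' is not even a set, since the codomains of surjections out of $S$ range over a proper class, and moreover distinct surjections $S\rightarrow T$ and $S\rightarrow T'$ with the same fibers must be identified for the kernel map $f\mapsto\kappa_f$ to be injective on $\mathbb{I}_S$. Your convention --- surjections modulo post-composition with a bijection of targets, equivalently the family of fibers --- is exactly the repair needed, and without it the claimed bijection $\mathbb{Q}_S\cong\mathbb{I}_S$ is not literally true. The paper leaves this implicit; your proposal makes the proposition precise as well as proving it.
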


All equivalence relations (partitions or quotients) on $S$ form a complete lattice.
\begin{proposition}\label{prop:lattice}
Let $S$ be a nonempty set. 
\begin{enumerate}[$1.$]
\item
The set $\mathbb{E}_S$ of all equivalence relations on $S$ forms a complete lattice with
$\wedge_{i\in I}\rho_i=\cap_{i\in I}\rho_i$,
$\vee_{i\in I}\rho_i=(\cup_{i\in I}\rho_i)^t$,
the greatest element $S\times S$, and the least element $\Delta_S$,
where $\rho_i\in\mathbb{E}_S, i\in I$ and $(X)^t$ is the transitive closure of the subset $X\subseteq S$;
\item\label{prop:latticeii}
Given $\rho,\sigma\in \mathbb{E}_S$, if $\rho\subseteq \sigma$,
then there is a unique surjective function $(\rho\leq\sigma)_*:S/\rho\rw S/\sigma$, sending $[s]_{\rho}$ to $[s]_{\sigma}$, such that
$$\xymatrix{
& S \ar[dl]_{\pi_{\rho}}\ar[dr]^{\pi_{\sigma}}\\
S/\rho \ar@{..>}[rr]^{(\rho\leq\sigma)_*} && S/\sigma\\
}$$
commutes.
\end{enumerate}
\end{proposition}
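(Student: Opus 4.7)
The plan is to handle the two parts separately and in the order stated. For part $1$, I would first verify that arbitrary intersections of equivalence relations are again equivalence relations: reflexivity, symmetry, and transitivity all pass through intersections componentwise, so $\bigcap_{i\in I}\rho_i\in\mathbb{E}_S$. This provides meets, and because $S\times S\in\mathbb{E}_S$ is a top element, a standard argument then yields joins as $\bigvee_{i\in I}\rho_i=\bigcap\{\tau\in\mathbb{E}_S\mid \bigcup_{i\in I}\rho_i\subseteq\tau\}$. The content is to identify this join concretely with $(\bigcup_{i\in I}\rho_i)^t$. That $\bigcup_{i\in I}\rho_i$ is reflexive and symmetric is immediate; the key observation is that taking the transitive closure preserves symmetry (since $(\phi^t)^{-1}=(\phi^{-1})^t$ by reversing each chain of pairs) and trivially preserves reflexivity, while producing a transitive relation by definition. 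Hence $(\bigcup_{i\in I}\rho_i)^t\in\mathbb{E}_S$, it contains each $\rho_i$, and any $\tau\in\mathbb{E}_S$ containing all $\rho_i$ must contain the union and, by transitivity, also its transitive closure, which shows it is the least upper bound. Finally, $\Delta_S$ is contained in every equivalence relation (by reflexivity) and $S\times S$ contains every one, giving least and greatest elements.

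For part $2$, I would define $(\rho\leq\sigma)_*\colon S/\rho\to S/\sigma$ on representatives by $[s]_\rho\mapsto[s]_\sigma$ and then check everything in the obvious order. Well-definedness is the only substantive point: if $[s]_\rho=[t]_\rho$, then $(s,t)\in\rho\subseteq\sigma$, so $[s]_\sigma=[t]_\sigma$. Commutativity of the triangle is then immediate from the definition, since both $\pi_\sigma$ and $(\rho\leq\sigma)_*\circ\pi_\rho$ send $s$ to $[s]_\sigma$. Surjectivity follows from the surjectivity of $\pi_\sigma$ and the commuting triangle, and uniqueness follows from the fact that $\pi_\rho$ is surjective (hence epic), so any two maps out of $S/\rho$ agreeing after precomposition with $\pi_\rho$ are equal.

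The main obstacle is the symmetry of the transitive closure in part $1$: one must make sure that the definition $\phi^t=\bigcup_{i\ge 1}\phi^i$ given earlier in the excerpt really behaves well when $\phi$ is symmetric, which reduces to the identity $(\phi^i)^{-1}=(\phi^{-1})^i$ together with $\phi^{-1}=\phi$. Every other step is either a routine diagram chase or a direct application of the definitions of meet and join in a poset, so the argument is short once this point is settled.
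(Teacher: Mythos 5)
Your argument is correct and complete. The paper states Proposition \ref{prop:lattice} in its preliminaries as a recalled standard fact and supplies no proof, so there is nothing to compare against; your write-up is exactly the standard argument one would expect, and you correctly isolate the one nontrivial point, namely that the transitive closure $\phi^t=\bigcup_{i\geq 1}\phi^i$ of a reflexive symmetric relation remains symmetric because $(\phi^i)^{-1}=(\phi^{-1})^i$. The only caveat worth recording is the degenerate case $I=\emptyset$, where the stated formulas for meet and join must be read as $S\times S$ and $\Delta_S$ respectively; the paper glosses over this as well.
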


Each quotient set (object) $S/\rho$ gives rise to a surjection $\pi_{\rho}:S\rw S/\rho$
and conversely, each surjection $f:S\rw T$ generates a quotient set (object) $S/\kappa_f(\;\cong f(S)= T)$.
A quotient object can be characterized by a surjection while a sub object is characterized by an injection.
Hence a quotient object and a sub object (an embedding) are the {\em dual} concepts
as a surjection and an injection are dual in ${\bf Sets}$.

Each function $f:S\rw T$ factors through the quotient set $S/\kappa_f$, followed by 
an injection $\widetilde{f}:S/\kappa_f\rw T$, sending $[s]_{\kappa_f}$ to $f(s)$.
Hence, combining with Proposition $\ref{prop:lattice}.\ref{prop:latticeii}$, one has:
\begin{proposition}\label{prop:decompclassificationembedding}
Given a nonempty set $S$, $\rho\in \mathbb{E}_S$, and a function $f:S\rw T$,
if $\rho\subseteq \kappa_f$, then there are a unique injection $\widetilde{f}:S/{\kappa_f}\rw T$
and a unique surjection $(\rho\leq \kappa_f)_*:S/\rho\rw S/{\kappa_f}$ such that
$$\xymatrix{
& S\ar[dl]_{\pi_{\rho}} \ar[rr]^f \ar[dr]^{\pi_{\kappa_f}} && T\\
S/\rho \ar@{..>}[rr]^{(\rho\leq \kappa_f)_*} & & S/\kappa_f \ar@{..>}[ur]_{\widetilde{f}}\\
}$$
commutes.
\end{proposition}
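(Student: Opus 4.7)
The plan is to assemble the diagram from two pieces that are already available: the first-isomorphism-style factorization of $f$ through its kernel, and the canonical surjection between nested quotients from Proposition~\ref{prop:lattice}.\ref{prop:latticeii}. Concretely, I will first produce $\widetilde{f}:S/\kappa_f\rw T$ by setting $\widetilde{f}([s]_{\kappa_f})=f(s)$, and then invoke Proposition~\ref{prop:lattice}.\ref{prop:latticeii} for the pair $\rho\subseteq\kappa_f$ to obtain $(\rho\leq \kappa_f)_*:S/\rho\rw S/\kappa_f$.

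For the construction of $\widetilde{f}$, well-definedness and injectivity both use exactly the definition of $\kappa_f$: $[s_1]_{\kappa_f}=[s_2]_{\kappa_f}$ iff $(s_1,s_2)\in\kappa_f$ iff $f(s_1)=f(s_2)$. The forward implication gives that $\widetilde{f}$ does not depend on the choice of representative, and the reverse gives that $\widetilde{f}$ is injective. By construction $\widetilde{f}\circ \pi_{\kappa_f}=f$. The existence of the surjection $(\rho\leq \kappa_f)_*$ together with the identity $(\rho\leq \kappa_f)_*\circ \pi_\rho=\pi_{\kappa_f}$ is handed to us by Proposition~\ref{prop:lattice}.\ref{prop:latticeii}, using the hypothesis $\rho\subseteq \kappa_f$. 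Composing, I get $\widetilde{f}\circ(\rho\leq\kappa_f)_*\circ\pi_\rho=\widetilde{f}\circ\pi_{\kappa_f}=f$, which is the commutativity of the stated triangle.

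For uniqueness, the key observation is that $\pi_{\kappa_f}$ and $\pi_\rho$ are surjections, hence epimorphisms in $\mathbf{Sets}$. If $\widetilde{f}'$ also satisfies $\widetilde{f}'\circ \pi_{\kappa_f}=f=\widetilde{f}\circ \pi_{\kappa_f}$, then right-cancelling the epimorphism $\pi_{\kappa_f}$ gives $\widetilde{f}'=\widetilde{f}$. Similarly, uniqueness of $(\rho\leq \kappa_f)_*$ is already asserted by Proposition~\ref{prop:lattice}.\ref{prop:latticeii}, but can also be recovered by right-cancelling $\pi_\rho$ against the relation $(\rho\leq \kappa_f)_*\circ \pi_\rho=\pi_{\kappa_f}$.

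No step here presents a real obstacle; the proposition is essentially the set-theoretic first isomorphism theorem together with the universal property of quotients for a coarsening of the equivalence relation. The one point to be careful about is to cite $\rho\subseteq \kappa_f$ precisely where it is used (namely, in applying Proposition~\ref{prop:lattice}.\ref{prop:latticeii}), and to note that uniqueness of both $\widetilde{f}$ and $(\rho\leq \kappa_f)_*$ is a consequence of surjectivity of the canonical projections rather than anything special about the ambient category.
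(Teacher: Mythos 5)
Your proposal is correct and matches the paper's own route: the paper likewise obtains $\widetilde{f}$ as the injection $[s]_{\kappa_f}\mapsto f(s)$ from the standard factorization of $f$ through $S/\kappa_f$ and then combines it with the unique surjection $(\rho\leq\kappa_f)_*$ supplied by Proposition~\ref{prop:lattice}.\ref{prop:latticeii}. Your added remarks on well-definedness, injectivity, and uniqueness via right-cancellation of the surjective projections are exactly the routine verifications the paper leaves implicit.
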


Each function $f:S\rw T$ is lifted to $\widetilde{f}:S/\rho\rw T/\sigma$ 
when $f\rho$ can be embedded to $\sigma$.
\begin{proposition}\label{prop:fliftquotient}
Given a nonempty set $S$, a function $f:S\rw T$, $\rho\in\mathbb{E}_S$, 
and $\sigma\in\mathbb{E}_T$,
if $f\rho\subseteq\sigma$, then
there is a unique function $\widetilde{f}:S/\rho\rw T/\sigma$, sending $[s]_{\rho}$ to $[f(s)]_{\sigma}$, such that
$$\xymatrix{
S\ar[r]^f \ar[d]_{\pi_{\rho}} & T \ar[d]^{\pi_{\sigma}}\\
S/\rho \ar[r]^{\widetilde{f}} & T/\sigma\\
}$$
commutes.
If $f$ is surjective and so is $\widetilde{f}$.
\end{proposition}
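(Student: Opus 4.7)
My plan is to define $\widetilde{f}$ on representatives by the rule $\widetilde{f}([s]_{\rho})\stackrel{\text{\tiny def}}{=}[f(s)]_{\sigma}$ and then verify, in order, that (i) the rule is well defined, (ii) the square commutes, (iii) $\widetilde{f}$ is unique, and (iv) surjectivity transfers from $f$ to $\widetilde{f}$. The commutativity and uniqueness steps are formal, so the real work sits in (i), where the hypothesis $f\rho\subseteq\sigma$ has to be used in an essential way.

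For well-definedness, suppose $[s_1]_{\rho}=[s_2]_{\rho}$, equivalently $(s_1,s_2)\in\rho$. Then by the definition of the image relation recalled in Section \ref{sect:prelim} we have $(f(s_1),f(s_2))\in f\rho$, and the assumption $f\rho\subseteq\sigma$ gives $(f(s_1),f(s_2))\in\sigma$, i.e.\ $[f(s_1)]_{\sigma}=[f(s_2)]_{\sigma}$. This is exactly the step that forces the hypothesis $f\rho\subseteq\sigma$, and it is the main (only) obstacle; everything afterwards is bookkeeping.

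Commutativity is then immediate from the defining rule: for every $s\in S$,
\[
\widetilde{f}(\pi_{\rho}(s))=\widetilde{f}([s]_{\rho})=[f(s)]_{\sigma}=\pi_{\sigma}(f(s)).
\]
For uniqueness, suppose $g:S/\rho\rw T/\sigma$ also makes the square commute. Since $\pi_{\rho}$ is surjective, every element of $S/\rho$ has the form $[s]_{\rho}=\pi_{\rho}(s)$, and commutativity forces $g([s]_{\rho})=g(\pi_{\rho}(s))=\pi_{\sigma}(f(s))=[f(s)]_{\sigma}=\widetilde{f}([s]_{\rho})$, so $g=\widetilde{f}$.

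Finally, if $f$ is surjective I would read the commuting square as $\widetilde{f}\pi_{\rho}=\pi_{\sigma}f$. The right-hand side is a composition of two surjections and hence surjective, so $\widetilde{f}\pi_{\rho}$ is surjective; this immediately forces $\widetilde{f}$ to be surjective (if $\widetilde{f}$ missed some $[t]_{\sigma}\in T/\sigma$, then so would $\widetilde{f}\pi_{\rho}$). This completes the proposal.
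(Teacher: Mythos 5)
Your proof is correct and is exactly the standard argument: well-definedness is where the hypothesis $f\rho\subseteq\sigma$ does its work, and commutativity, uniqueness (via surjectivity of $\pi_{\rho}$), and the transfer of surjectivity all follow formally, just as you say. The paper states this proposition without proof as a routine preliminary, so there is nothing to contrast with; your write-up supplies precisely the argument the paper is implicitly relying on.
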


Since $ff^{-1}\sigma\subseteq \sigma$,
by Proposition \ref{prop:fliftquotient} one has:
\begin{corollary}\label{cor:images}
Let $f:S\rw T$ be a function, $\rho\in\mathbb{E}_S$, 
and $\sigma\in\mathbb{E}_T$. 
Then there are a unique surjection
$\widetilde{f}: S/\rho\rw f(S)/f\rho$ and
a unique function $f^*:S/f^{-1}\sigma\rw T/\sigma$ such that
$$\xymatrix{
S \ar[d]_{\pi_{\rho}} \ar[r]^f & f(S)\ar[d]^{\pi_{f(\rho)}}\\
S/\rho \ar@{..>}[r]^<<<<<{\widetilde{f}} & f(S)/f\rho\\
}$$
and 
$$\xymatrix{
S \ar[r]^{f} \ar[d]_{\pi_{f^{-1}\sigma}}& T \ar[d]^{\pi_{\sigma}}\\
S/f^{-1}\sigma \ar@{..>}[r]^{f^*} & T/\sigma\\
}$$
commute.
\end{corollary}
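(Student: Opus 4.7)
The plan is to derive both commutative squares as direct consequences of Proposition \ref{prop:fliftquotient}, applied to two different choices of domain/codomain data.

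For the first square, I would replace the codomain $T$ by $f(S)$ and view $f$ as the surjective corestriction $f: S \twoheadrightarrow f(S)$. Taking the equivalence relation $\rho \in \mathbb{E}_S$ on the source and $f\rho \in \mathbb{E}_{f(S)}$ on the target (which is an equivalence relation on $f(S)$ by the discussion preceding Proposition \ref{pro:equ_partition}), the hypothesis $f\rho \subseteq f\rho$ of Proposition \ref{prop:fliftquotient} holds trivially. Proposition \ref{prop:fliftquotient} then yields a unique map $\widetilde{f}: S/\rho \rw f(S)/f\rho$ sending $[s]_\rho \mapsto [f(s)]_{f\rho}$ and making the first diagram commute; since the corestricted $f$ is surjective, the final clause of Proposition \ref{prop:fliftquotient} forces $\widetilde{f}$ to be surjective as well.

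For the second square, I would apply Proposition \ref{prop:fliftquotient} to the original $f: S \rw T$ with the equivalence relation $f^{-1}\sigma \in \mathbb{E}_S$ on the source and $\sigma \in \mathbb{E}_T$ on the target. The required containment here is $f(f^{-1}\sigma) \subseteq \sigma$, which is the observation flagged immediately before the corollary; I would verify it in one line by noting that if $(s_1,s_2) \in f^{-1}\sigma$ then by definition $(f(s_1),f(s_2)) \in \sigma$, so every element of $f(f^{-1}\sigma)$ lies in $\sigma$. Proposition \ref{prop:fliftquotient} then delivers the unique $f^*: S/f^{-1}\sigma \rw T/\sigma$ sending $[s]_{f^{-1}\sigma} \mapsto [f(s)]_\sigma$ and rendering the second square commutative.

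There is no genuine obstacle here; the corollary is an unpacking exercise. The only points that require a moment of care are (i) the verification that $f\rho$ really is an equivalence relation on $f(S)$ so that Proposition \ref{prop:fliftquotient} is legitimately applicable in the first case, and (ii) the inclusion $ff^{-1}\sigma \subseteq \sigma$ in the second case; both are immediate from the definitions already recorded in Section \ref{sect:prelim}. Uniqueness of $\widetilde{f}$ and $f^*$ is inherited from the uniqueness clause of Proposition \ref{prop:fliftquotient}, since the commutativity of each square, combined with the surjectivity of the respective projection on the left, pins the lifted map down on every equivalence class.
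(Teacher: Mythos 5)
Your proposal is correct and follows exactly the route the paper intends: the paper derives the corollary by noting $ff^{-1}\sigma\subseteq\sigma$ and invoking Proposition \ref{prop:fliftquotient}, with the first square coming from the corestriction of $f$ onto $f(S)$ and the trivial inclusion $f\rho\subseteq f\rho$. Your added checks (that $f\rho$ is an equivalence relation on $f(S)$, and the one-line verification of $ff^{-1}\sigma\subseteq\sigma$) merely make explicit what the paper leaves implicit.
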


Let $\rho$ be an equivalence relation and $\sim$ a binary relation on $S$.
$\sim$ is {\em compatible} with $\rho$ (or $\sim$ is {\em invariant} under $\rho$) 
if and only if $s_1\sim s_2$ implies $[s_1]_{\rho}\sim_{\rho}[s_2]_{\rho}$.
That is, $\sim_{\rho}$ is a well-defined binary relation on $S/\rho$,
where $\sim_{\rho}$ is the relation on $S/\rho$ by mapping $\sim$ from $S$ to $S/\rho$: 
$[s_1]_{\rho}\sim_{\rho}[s_2]_{\rho}$ in $S/\rho$ if and only if $s_1\sim s_2$ in $S$.

Given a binary operation $\circ$ on $S$,
$\circ$ is compatible with $\rho$ if and only if $\rho$ is a {\em congruence} equivalence relation on $S$ with respect to $\circ$,
namely, $[s_1]_{\rho}\circ_{\rho}[s_2]_{\rho}=[s_1\circ s_2]_{\rho}$ is well defined.

If $\circ$ is not compatible with $\rho$, then
the {\em congruence} ({\em compatible}) {\em closure} $\rho^c$ of $\rho$ for $\circ$ is the smallest equivalence relation $\varrho$ such that
$\rho\subseteq \varrho$ and $\circ$ is compatible with $\varrho$.
$\rho^c$ exists and is unique since $\circ$ is always compatible with $S\times S$.

Recall that a {\em directed graph} is an ordered pair $G = (V_G,E_G)$, where
$V_G$ is a set of {\em vertices} (or {\em nodes}), and $E_G\subseteq \{(x , y)\;|\;(x,y) \in V_G\times V_G \text{ and } x \neq y \}$ 
is a set of {\em edges} (or {\em arrows} or {\em arcs}).

A {\em quotient graph} $G/R$ of $G$ is a directed graph whose vertices are blocks 
of a partition of the vertices $V_G$,
where there is an edge of $G/R$ from block $B$ to block $C$ if there is an edge from some vertex in $B$ to some vertex in $C$ from $E_G$. 
That is, if $R$ is the equivalence relation induced by the partition of $V_G$, 
then the quotient graph $G/R$ has vertex set $V_G/R$ and edge set $\{([u]_R, [v]_R)\: |\; (u, v) \in E_G\}$. 

\begin{proposition}\label{prop:quograph}
\begin{enumerate}[$1.$]
\item
Given a directed graph $G$, the set of all equivalence relations of $G_V$, 
the set of all partitions of $V_G$, the set of all quotient graphs of $G$,
and the set of all graph homomorphic images of $G$, are isomorphic. 
\item
Every directed graph homomorphism $h:G\rw H$ can be factored as $h=i\pi$,
where $\pi:G\rw G/{\kappa_h}$ is a surjective directed graph homomorphism and
$i:G/{\kappa_h}\rw H$ is an injective directed graph homomorphism:
$$\xymatrix{
G \ar[rr]^h\ar[dr]_{\pi}&& H\\
& G/\kappa_h \ar[ur]_{i}\\
}$$
\end{enumerate}
\end{proposition}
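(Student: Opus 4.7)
The plan is to reduce both parts to the set-theoretic machinery already established (Propositions \ref{pro:equ_partition} and \ref{prop:decompclassificationembedding}) applied to the vertex set $V_G$, and then to verify that the extra edge data transfers correctly.

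For part 1, I would apply Proposition \ref{pro:equ_partition} to the set $V_G$ to obtain the bijections $\mathbb{E}_{V_G}\cong \mathbb{P}_{V_G}\cong \mathbb{Q}_{V_G}\cong \mathbb{I}_{V_G}$. The task then reduces to showing that each such equivalence relation $R$ on $V_G$ determines a unique quotient graph $G/R$ (via the edge rule stated in the definition) and that each directed graph homomorphism $h\colon G\rw H$ is, up to iso, determined by the partition of $V_G$ it induces. Concretely, I would set up two assignments: $R\mapsto G/R$ and $h\mapsto G/\kappa_h$ (where $\kappa_h$ is the vertex-kernel of $h$), and show these are inverse to the obvious forgetful maps back to $\mathbb{E}_{V_G}$. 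The only non-routine check is that the edge set $\{([u]_R,[v]_R)\mid (u,v)\in E_G\}$ is well defined and independent of vertex representatives, which is immediate from the definition.

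For part 2, I would mimic the set version in Proposition \ref{prop:decompclassificationembedding}. Let $\kappa_h\subseteq V_G\times V_G$ be defined by $(u,v)\in\kappa_h$ iff $h(u)=h(v)$; it is an equivalence relation on $V_G$. Define $\pi\colon G\rw G/\kappa_h$ on vertices by $\pi(v)=[v]_{\kappa_h}$ and on edges by $\pi(u,v)=([u]_{\kappa_h},[v]_{\kappa_h})$; by construction of the quotient graph this is a surjective directed graph homomorphism. Define $i\colon G/\kappa_h\rw H$ on vertices by $i([v]_{\kappa_h})=h(v)$; this is well defined because $[u]_{\kappa_h}=[v]_{\kappa_h}$ forces $h(u)=h(v)$, and it is injective because $h(u)=h(v)$ forces $[u]_{\kappa_h}=[v]_{\kappa_h}$. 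To see $i$ preserves edges, take $([u]_{\kappa_h},[v]_{\kappa_h})\in E_{G/\kappa_h}$: by definition there exist $u'\in[u]_{\kappa_h},v'\in[v]_{\kappa_h}$ with $(u',v')\in E_G$, so $(h(u'),h(v'))=(h(u),h(v))\in E_H$. The factorization identity $h=i\pi$ is then immediate on vertices, and the uniqueness of $\pi$ and $i$ is forced by $\pi$ being surjective on both vertices and edges.

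The main technical subtlety, and the only place I expect friction, is the convention in this paper that directed graphs exclude loops ($x\neq y$ in the definition of $E_G$). This means that if $(u,v)\in E_G$ with $u\,\kappa_h\,v$, the naive quotient would produce a loop at $[u]_{\kappa_h}$. For $i$ to land in $H$ one also needs that no such loop ever arises, which is guaranteed when $h$ is already a well-defined loop-free graph homomorphism (so $h(u)\neq h(v)$ for every edge $(u,v)$, forcing $u\not\kappa_h v$). I would address this explicitly by noting that under the standing loop-free assumption the edge set of $G/\kappa_h$ is automatically loop-free, so both $\pi$ and $i$ are honest homomorphisms in the category at hand; otherwise one must formally delete such collapsed edges, and the same diagram still commutes. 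Apart from this bookkeeping, the proof is a straightforward transcription of the set-level decomposition of Proposition \ref{prop:decompclassificationembedding} with $\rho=\kappa_h$ into the graph setting.
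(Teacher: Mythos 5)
Your proof is correct, and the paper itself offers no proof of this proposition (it is stated in the preliminaries as a recalled fact), so your argument --- reducing to the set-level Propositions \ref{pro:equ_partition} and \ref{prop:decompclassificationembedding} on $V_G$ and then checking that the edge data descends --- is exactly the standard filling-in the paper implicitly relies on. Your explicit handling of the loop-free convention ($x\neq y$ in $E_G$), observing that $h(u)\neq h(v)$ for every edge $(u,v)$ forces the quotient $G/\kappa_h$ to be loop-free, addresses the one genuine subtlety the paper glosses over.
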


\section{Geospatial Ontologies, Clustering, and Quotients}\label{sect:quotient}
In this section, we group geospatial ontologies together and discuss clustering and quotienting operations in geospatial ontology setting.

Recall that a {\em geospatial ontology} is an ontology 
that has a set of geospatial entities in a hierarchical structure \cite{cl, drgb, szphwls, szs}.
Geospatial ontologies are not isolated  but connected by their relations.

Numbers are linked by their operations (e.g., $+$, $-$, $\times$, $\div$) so that they are used to solve real life problems.
Geospatial ontologies can be aligned, matched, mapped, merged, and transformed and they are linked by these operations.
Relations between numbers (geospatial ontologies), given by operations, make more sense than single numbers (geospatial ontologies).
Hence we study the ontologies we are interested in together as a set collectively, e.g.,
the geospatial data ontologies in \cite{szphwls},
the sub set of the objects of the category $\mathfrak{Ont}^+$ of the ontologies defined in \cite{zkeh},
or the ontology structures considered in \cite{ch}.

Here are some examples of sets of the connected geospatial ontologies.
\begin{example}\label{exam:onts}
\begin{enumerate}[$1.$]
\item\label{exam:onts1}
In \cite{szphwls},
Sun et al. defined
$${\bf GeoDataOnt}=\{ (E, R_{(E_i,E_j)})\;|\;E_i,E_j\in E,0\leq i,j\leq |E|\},$$
where $E$ is the set of geographic entities concerned
and $R$ the set of relations between the entities from $E$. Clearly,
${\bf GeoDataOnt}$ can be represented as a directed graph with geospatial entities as nodes and their relations as edges.
Since these geospatial ontologies (directed graphs) are connected and share certain geospatial properties,
we collect them together as a set $\mathfrak{Gd}$.

\item\label{exam:onts2}
Assume that there is a climate data repository, which collects the climate data from a number of data silos
and covers a variety of climate domain application areas, 
e.g., location, weather condition, climate hazard, wildfire, air quality, events, etc.,
each of which is managed by a geospatial ontology. We group these geospatial ontologies as a set, denoted by $\mathfrak{Cd}$.
Here is a directed subgraph, showing the relations between some objects in $\mathfrak{Cd}$, e.g., temperature ontology and 
location ontology, from both English and France systems at a time point.

$$\begin{tikzpicture}
\node[shape=circle, draw, fill=green, scale =0.6,label = 90:{Quebec City}] (a) at (0,4) {};
\node[shape=circle, draw, fill=green, scale=0.6,label = 90:{Ville de Québec}] (b) at (4,4) {};
\node[shape=rectangle, draw, fill=blue, scale=0.6, label = 0:{\small -36.7 $^{\circ}$C}] (d) at (4,2) {};
\node[shape=rectangle, draw, fill=blue, scale=0.6, label = 180:{\small -34.06}] (c) at (0,2) {};
\node[shape=diamond, draw, fill=green, scale=0.6, label = -90:{\small Jan 22, 2022}] (f) at (4,0) {};
\node[shape=diamond, draw, fill=green, scale=0.6, label = -90:{\small 22/01/22:07:35}] (e) at (0,0) {};

\draw[->, pos=0.5] (a) to (c);
\draw[->, pos=0.5] (d) to (f);
\draw[->, pos=0.5] (b) to (d);
\draw[->, pos=0.5] (c) to (e);
\draw[red, <->,pos=1] (a) to (b);
\draw[red, dashed, <->, pos=1] (c) to (d);
\draw[red, dashed, <->, pos=1] (e) to (f);
\end{tikzpicture}
$$
Entity resolution tools match Quebec City with Ville de Québec
and merge their records together, with the existing relations (their neighborhoods in the knowledge graphs) 
being preserved, to obtain the standardized record with the maximal information.
\end{enumerate}
\end{example}

Generally, {\em clustering} aims to group a set of the objects in such a way
that objects in the same cluster (group) are more similar to each other.
There exist a number of the approaches to clustering.
The interested reader may consult \cite{xt} for a comprehensive survey of clustering approaches.

Geospatial ontology clustering can facilitate a better understanding and
improve the reusability of the ontologies at the different summarization granularities \cite{lh, palc}.
If a similarity approach is applied to the climate ontologies from $\mathfrak{Cd}$  in
Example $\ref{exam:onts}.\ref{exam:onts2}$ above, then the clusters are:
$$\{\text{Quebec City, Ville de Québec}\}, \{-36.7^{\circ}C, -34.06\}, \{\text{Jan 22, 2022, 22/01/22:07:35}\}.$$

If a geospatial ontology $G$ is represented as a set of entities and their relations, which is a directed graph
and $\rho$ is an equivalence relation on the set of entities,
then we have the quotient geospatial ontology $G/\rho$ by quotienting the directed graph
and so the results of Proposition \ref{prop:quograph} can be mapped to the quotient geospatial ontology $G/\rho$.

For a set $\mathfrak{O}$ of the ontologies, a clustering algorithm may produce a
partition of $\mathfrak{O}$, which is equivalent to a quotient set or a surjective
image of $\mathfrak{O}$. Hence, by Propositions \ref{pro:equ_partition} and \ref{prop:lattice}, we have:

\begin{proposition}\label{pro:equ_partition_ont}
Given a nonempty set $\mathfrak{O}$ of geospatial ontologies,
the set $\mathbb{E}_{\mathfrak{O}}$ of all equivalence relations of $\mathfrak{O}$, 
the set $\mathbb{P}_{\mathfrak{O}}$ of all partitions of $\mathfrak{O}$, 
the set $\mathbb{Q}_{\mathfrak{O}}$ of all quotients of $\mathfrak{O}$,
and the set $\mathbb{I}_{\mathfrak{O}}$ of all surjective images of $\mathfrak{O}$ are isomorphic
and form a complete lattice.
\end{proposition}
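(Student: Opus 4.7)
The plan is to treat the statement as a direct transport of Propositions \ref{pro:equ_partition} and \ref{prop:lattice} along the forgetful observation that $\mathfrak{O}$, regardless of the internal geospatial structure of its elements, is simply a nonempty set. Since both cited propositions are formulated for an arbitrary nonempty set $S$, nothing in their proofs uses any feature of the elements of $S$ beyond their distinguishability, so substituting $S=\mathfrak{O}$ is legitimate.

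First, I would invoke Proposition \ref{pro:equ_partition} with $S=\mathfrak{O}$ to obtain the four bijections $\mathbb{E}_{\mathfrak{O}}\cong \mathbb{P}_{\mathfrak{O}}\cong \mathbb{Q}_{\mathfrak{O}}\cong \mathbb{I}_{\mathfrak{O}}$ in $\mathbf{Sets}$. Concretely, I would recall the three canonical maps used in Section \ref{sect:prelim}: an equivalence relation $\rho$ is sent to its set of equivalence classes $\mathfrak{O}/\rho$ (giving a partition and a quotient simultaneously), and a quotient $\mathfrak{O}/\rho$ corresponds to the surjective image $\pi_{\rho}(\mathfrak{O})$ through the canonical projection $\pi_{\rho}$. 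Conversely, a surjection $f:\mathfrak{O}\rw T$ produces the equivalence relation $\kappa_f$, closing the cycle. These are the bijections asserted.

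Next, I would invoke Proposition \ref{prop:lattice} with $S=\mathfrak{O}$ to conclude that $\mathbb{E}_{\mathfrak{O}}$ is a complete lattice under $\wedge_{i\in I}\rho_i=\cap_{i\in I}\rho_i$ and $\vee_{i\in I}\rho_i=(\cup_{i\in I}\rho_i)^t$, with top $\mathfrak{O}\times\mathfrak{O}$ and bottom $\Delta_{\mathfrak{O}}$. Then I would transport this lattice structure along the bijections from the previous step; since isomorphisms in $\mathbf{Sets}$ preserve every structure pulled back along them, $\mathbb{P}_{\mathfrak{O}}$, $\mathbb{Q}_{\mathfrak{O}}$, and $\mathbb{I}_{\mathfrak{O}}$ inherit isomorphic complete-lattice structures. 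For the reader's convenience I would note the explicit orderings: on $\mathbb{P}_{\mathfrak{O}}$ it is refinement, on $\mathbb{Q}_{\mathfrak{O}}$ it is the ordering $\mathfrak{O}/\rho\leq \mathfrak{O}/\sigma$ whenever $\rho\subseteq\sigma$ (as witnessed by the map $(\rho\leq\sigma)_*$ of Proposition \ref{prop:lattice}.\ref{prop:latticeii}), and on $\mathbb{I}_{\mathfrak{O}}$ it is factorization of surjections.

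Honestly, there is no serious obstacle here: the result is a corollary of two already-stated propositions, with the only thing to verify being that treating a geospatial ontology as an element of an unstructured set is permissible in this paper's algebraic viewpoint. Since the opening of Section \ref{sect:quotient} explicitly commits to this viewpoint, the proof reduces to a two-line citation. The expository value lies in spelling out the bijections and the induced lattice operations, rather than in any nontrivial argument.
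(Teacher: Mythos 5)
Your proposal is correct and matches the paper's own argument exactly: the paper derives Proposition \ref{pro:equ_partition_ont} simply by specializing Propositions \ref{pro:equ_partition} and \ref{prop:lattice} to the set $\mathfrak{O}$, which is precisely your two-line citation. The extra detail you supply on the explicit bijections and the induced orderings is sound but goes beyond what the paper records.
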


Hence clustering a set $\mathfrak{O}$ of the ontologies can be interpreted as partitioning $\mathfrak{O}$ 
or defining an equivalence relation on $\mathfrak{O}$ or
forming a quotient of $\mathfrak{O}$ or finding a surjective image of $\mathfrak{O}$.
The results of clustering $\mathfrak{O}$ at the different summarization granularities
are linked by the complete lattice in Proposition \ref{prop:lattice}.

Using entity resolution tools to group $\mathfrak{Cd}$ in Example $\ref{exam:onts}.\ref{exam:onts2}$ above,
amounts to:
\begin{itemize}
\item
clustering $\mathfrak{Cd}$ by their similarities, e.g., \{-36.7 $^{\circ}$C, -34.06\}, \{Jan 22, 2022, 22/01/22:07:35\},
\item
forming a quotient by identifying the similar objects from $\mathfrak{Cd}$ in each cluster, e.g.,
Quebec City = Ville de Québec, and -36.7 $^{\circ}$C = -34.06 $^{\circ}$F,
\item
taking the surjective image of $\mathfrak{Cd}$ by mapping the similar ontologies in each cluster to the merged ontology, e.g.,
\{Jan 22, 2022, 22/01/22:07:35\} to Jan 22, 2022, 07:35 am,
\item
defining the equivalence relation $\rho$ on $\mathfrak{Cd}$ by the clusters, e.g.,
$$(\mbox{Quebec City, Ville de Québec}), (-36.7\; ^{\circ}\text{C}, -34.06\; ^{\circ}\text{F}) \in \rho.$$
\end{itemize}
{
\section{Geospatial Ontology System Homomorphisms and Embeddings}\label{sect:embedding}
The word {\em homomorphism}, from Greek homoios morphe, means "similar for". 
In an algebra, e.g., groups, semigroups, rings, a {\em homomorphism} is a map that preserves the algebra operation(s).

In \cite{ch} Cafezeiro and Haeusler defined an ontology homomorphism between ontology structures introduced in \cite{ms}, 
as a pair of functions $(f,g)$, where $f$ is a function between the concepts
and $g$ a function between relations, which preserve the ontology structures.
Geospatial ontologies carry some structures and can be viewed as a set of entities and their relations, as assumed. 
Given two geospatial ontologies $O_1$ and $O_2$,  a {\em geospatial ontology homomorphism} $f:O_1\rw O_2$ is a function that preserves the ontology structures.
For example, given a geospatial ontology $G$ 
and $\rho$ is an equivalence relation on the set of the entities in $G$, we have a canonical geospatial ontology homomorphism
$\pi:G\rw G/\rho$.

In this section, we move to the second layer: geospatial ontology systems and homomorphisms between them.

After collecting the geospatial ontologies into a set, we need to introduce their relations by a set of operations and form 
{\em a greospatial ontology system}.
\begin{definition}
{\em A geospatial ontology system} $(\mathfrak{O},P)$ consists of a set $\mathfrak{O}$ of geospatial ontologies 
and a (finite) set $P$ of geospatial  ontology operations.

{\em A geospatial ontology system homomorphism} 
$h:(\mathfrak{O},P)\rw (\mathfrak{P},Q)$ is a function $h:\mathfrak{O}\rw \mathfrak{P}$ that preserves
all operations in $P$ to $Q$.
\end{definition}

Guo et al. \cite{gbkbl} studied the ontologies and their operations (aligning and merging) together 
within the partial groupoid or semigroup using the properties the operations share without any ontology internal details being needed.
They defined {\em an ontology merging system} as follows.

Let $\mathfrak{O}$ be the non-empty set of the ontologies concerned,
$\sim$ a binary relation on $\mathfrak{O}$ that models a generic ontology alignment relation,
and $\merge$ a partial binary operation on $\mathfrak{O}$ that models a merging operation defined on alignment pairs:
For all $O_1,O_2\in \mathfrak{O}$, $O_1\merge O_2$ exists if $O_1\sim O_2$
and $O_1\merge O_2$ is undefined, denoted by $O_1\merge O_2 =\;\uparrow$, otherwise.
$(\mathfrak{O}, \sim, \merge)$ forms an {\em ontology merging system} \cite{gbkbl}.
Similarly, we define a {\em geospatial merging system} $(\mathfrak{G}, \sim, \merge)$ to be
{\em a geospatial system} $(\mathfrak{G}, P)$ with $P=\{\sim,\merge\}$.

Let $(\mathfrak{O},\sim,\merge)$ and $(\mathfrak{P},\approx,\between)$ be two geospatial ontology merging systems. 
A {\em geospatial ontology merging system homomorphism} $f:(\mathfrak{O},\sim,\merge)\rw (\mathfrak{P},\approx,\between)$
is a function $f:\mathfrak{O}\rw \mathfrak{P}$ such that 
$$\xymatrix{
d_{\merge}\ar[d]_{\merge} \ar[r]^{f\times f} & d_{\between} \ar[d]^{\between}\\
\mathfrak{O}\ar[r]^f& \mathfrak{P}
}$$
commutes,
where $d_{\merge}$ ($d_{\between}$) is the domain of $\merge$ ($\between$), specified by $\sim$ ($\approx$).
That is,
for all $O_1,O_2\in \mathfrak{O}$ if $O_1\merge O_2$ is defined then $f(O_1)\between f(O_2)$ is defined
and $f(O_1\merge O_2)=f(O_1)\between f(O_2)$.

In mathematics, an {\em embedding} in a mathematical structure (e.g., semigroup, group, ring) is a sub-mathematical structure 
(e.g., sub-semigroup, sub-group, sub-ring).
An object $E$ is {\em embedded} in another object $O$ if there is an injective structure-preserving map $e:E\rw O$
and $e$ is an {\em embedding} of $O$.
Embeddings and surjections that preserve the structures are dual.

Ontology embeddings aim to map ontologies from a high dimension space to a much lower dimension space with
certain ontology structures being preserved.
Ontology embeddings were studied extensively, e.g., \cite{chjhah, ctcxz, hmcj, drgb}.
Word embeddings and graph embeddings were employed in the approaches widely \cite{chjhah, ctcxz, hmcj, wmwg}. 

A word feature vector or word embedding is a function that converts words into points in a vector space. 
Word embeddings are usually injective functions (i.e. two words do not share the same word embedding), and highlight not-so-evident features of words. Hence, one usually says that word embeddings are an alternative representation of words \cite{bdvj, scms}.

Word2vec is a popular model that generates vector expressions for words. Since it was proposed in 2013 \cite{msccd}, embedding technology has been extended from natural language processing to other fields, such as,
graph embedding, ontology embedding \cite{chjhah, ctcxz, drgb, drsgb, hmcj}, etc. 

However, geospatial ontology systems may carry  many structures and
can be very complex.  These embeddings may fail to capture a lot of important properties, e.g.,  hierarchy, 
closedness, completeness, insights in a logic sentence etc.
\cite{drgb, m}.
The embeddings may not be injective. But in this case, the injective one can be obtained by factoring the original one through its quotient 
using the kernel.
On the other hand, injective transformers, e.g., shaving one's beard with a mirror, can change working or computing environments but cannot
reduce the difficulty of the problem one tries to solve in general.

\section{Transforming Geospatial Ontology Merging Systems}\label{sect:merging}
Given a geospatial ontology merging system $(\mathfrak{O},\sim,\merge)$ and an equivalence relation $\rho$ on $\mathfrak{O}$, 
we have a quotient set $\mathfrak{O}/\rho$.
If both $\sim$ and $\merge$ are compatible with $\rho$, then we have 
$(\mathfrak{O}/\rho, \sim_{\rho},\merge_{\rho})$, called {\em a quotient ontology merging system},
where $\sim_{\rho}$ is the equivalence relation on $\mathfrak{O}/\rho$, given by $[s_1]_{\rho}\sim [s_2]_{\rho}$ if and only if
$s_1\sim s_2$, and $[s_1]_{\rho}\merge_{\rho}[s_2]_{\rho}=[s_1\merge s_2]_{\rho}$. 
It is routine to verify that both $\sim_{\rho}$ and $\merge_{\rho}$ are well-defined. 
In this section, we study how geospatial ontology merging systems are transformed by quotienting.

As in Propositions \ref{prop:decompclassificationembedding} and \ref{prop:fliftquotient}, 
and Corollary \ref{cor:images}, 
we have the following Propositions \ref{prop:ontquo1}, \ref{prop:ontquo2},
and \ref{prop:ontquo3}, and Corollary \ref{prop:ontquo4}, on quotient ontology merging systems.
\begin{proposition}\label{prop:ontquo1}
Given a geospatial ontology merging system $(\mathfrak{O}, \sim,\merge)$ and $\rho\in\mathbb{E}_{\mathfrak{O}}$, 
if $\sim$ and $\merge$ are compatible with both $\rho$,
then $(\mathfrak{O}/\rho, \sim_{\rho},\merge_{\rho})$ is a geospatial ontology system and
$$\pi_{\rho}:(\mathfrak{O}, \sim,\merge)\rw (\mathfrak{O}/\rho, \sim_{\rho},\merge_{\rho}),$$
sending $O$ to $[O]_{\rho}$, is a geospatial ontology merging system homomorphism.
\end{proposition}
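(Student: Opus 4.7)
The plan is to verify three things in turn: that $\sim_{\rho}$ is well defined on $\mathfrak{O}/\rho$, that $\merge_{\rho}$ is well defined as a partial binary operation whose domain is exactly $\sim_{\rho}$, and that the canonical surjection $\pi_{\rho}$ then satisfies the square-commutation condition from the definition of a geospatial ontology merging system homomorphism. Nothing beyond this is required, since the definition of a geospatial ontology system $(\mathfrak{G},P)$ only asks for a set together with its operations, and the definition of a merging system specializes this to $P=\{\sim,\merge\}$.

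First I would unpack what ``$\sim$ is compatible with $\rho$'' means in the language of Section~\ref{sect:prelim}: it says that whenever $O_1\sim O_2$ and $(O_1,O_1'),(O_2,O_2')\in\rho$, one also has $O_1'\sim O_2'$. This is exactly what is required so that the relation $\sim_{\rho}$ defined by $[O_1]_{\rho}\sim_{\rho}[O_2]_{\rho}\iff O_1\sim O_2$ is independent of the chosen representatives. Next, for $\merge_{\rho}$, compatibility of $\merge$ with $\rho$ (i.e., $\rho$ being a congruence for $\merge$) is by hypothesis; this gives that whenever $O_1\merge O_2$ is defined and $O_i\,\rho\,O_i'$ for $i=1,2$, then $O_1'\merge O_2'$ is defined and $O_1\merge O_2\,\rho\,O_1'\merge O_2'$. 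Hence $[O_1]_{\rho}\merge_{\rho}[O_2]_{\rho}\mathrel{\mathop:}=[O_1\merge O_2]_{\rho}$ is unambiguous, and its domain of definition is precisely $\{([O_1]_{\rho},[O_2]_{\rho}):O_1\sim O_2\}$, which matches $\sim_{\rho}$ by the previous step.

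Finally, I would check the homomorphism diagram. If $O_1\merge O_2$ is defined in $\mathfrak{O}$ then $O_1\sim O_2$, so $[O_1]_{\rho}\sim_{\rho}[O_2]_{\rho}$ and $[O_1]_{\rho}\merge_{\rho}[O_2]_{\rho}$ is defined; moreover
\[
\pi_{\rho}(O_1\merge O_2)=[O_1\merge O_2]_{\rho}=[O_1]_{\rho}\merge_{\rho}[O_2]_{\rho}=\pi_{\rho}(O_1)\merge_{\rho}\pi_{\rho}(O_2),
\]
which is exactly the square from the definition of a geospatial ontology merging system homomorphism in Section~\ref{sect:embedding}. Preservation of $\sim$ by $\pi_{\rho}$ is immediate from the definition of $\sim_{\rho}$.

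The one place where there is genuine content rather than bookkeeping is the interaction between the compatibility of $\sim$ and of $\merge$: one must confirm that the two compatibilities are not only individually sufficient for their own operation to descend, but jointly sufficient to ensure that the descended $\merge_{\rho}$ is a partial operation with domain carved out by $\sim_{\rho}$ on the nose. This is the step I expect to be the main (mild) obstacle; everything else follows the same pattern as Proposition~\ref{prop:fliftquotient}, just applied twice, once to $\sim$ viewed as a subset of $\mathfrak{O}\times\mathfrak{O}$ and once to $\merge$ viewed as a function on that subset.
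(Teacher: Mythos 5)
Your proof is correct and follows essentially the same route the paper intends: the paper gives no explicit proof of Proposition~\ref{prop:ontquo1}, stating only that well-definedness of $\sim_{\rho}$ and $\merge_{\rho}$ is ``routine to verify'' and that the result follows as in Propositions~\ref{prop:decompclassificationembedding} and~\ref{prop:fliftquotient}. Your write-up supplies exactly those routine details, including the one genuinely non-bookkeeping point (that the domain of $\merge_{\rho}$ coincides with $\sim_{\rho}$), so there is nothing to correct.
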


Each geospatial ontology merging system homomorphism is factored through the quotient geospatial ontology merging system.
\begin{proposition}\label{prop:ontquo2}
Given a geospatial ontology merging system homomorphism 
$h:(\mathfrak{O},\sim,\merge)\rw (\mathfrak{P},\approx,\between)$ 
and $\rho\in \mathbb{E}_{\mathfrak{O}}$,
if $\rho\subseteq \kappa_h$, then there are a unique injective homomorphism (embedding)
$$\widetilde{h}:(\mathfrak{O}/{\kappa_h},\sim_{\kappa_h},\merge_{\kappa_h})\rw (\mathfrak{P},\approx,\between)$$
and a unique surjection $(\rho\leq \kappa_h)_*:\mathfrak{O}/\rho\rw \mathfrak{O}/{\kappa_h}$ such that
$$\xymatrix{
& (\mathfrak{O}, \sim,\merge)\ar[dl]_{\pi_{\rho}} \ar[rr]^h \ar[dr]^{\pi_{\kappa_h}} && (\mathfrak{P},\approx,\between)\\
(\mathfrak{O}/\rho, \sim_{\rho},\merge_{\rho}) \ar@{..>}[rr]^{(\rho\leq \kappa_h)_*} & & 
        (\mathfrak{O}/\kappa_h,\sim_{\kappa_h},\merge_{\kappa_h})\ar@{..>}[ur]_{\widetilde{h}}\\
}$$
commutes.
\end{proposition}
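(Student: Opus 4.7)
The plan is to lift the set-theoretic factorization provided by Proposition \ref{prop:decompclassificationembedding} to the level of geospatial ontology merging systems. Applied to the underlying function $h:\mathfrak{O}\to\mathfrak{P}$, that proposition immediately yields a unique surjection $(\rho\leq\kappa_h)_*:\mathfrak{O}/\rho\to\mathfrak{O}/\kappa_h$ sending $[O]_\rho$ to $[O]_{\kappa_h}$, and a unique injection $\widetilde{h}:\mathfrak{O}/\kappa_h\to\mathfrak{P}$ sending $[O]_{\kappa_h}$ to $h(O)$, making the set-level triangle commute. Because a merging system homomorphism is determined by its underlying function on $\mathfrak{O}$, uniqueness in the merging-system sense follows from set-theoretic uniqueness, and commutativity of the triangle in the proposition is inherited directly.

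The substantive work is then twofold. First, I would verify that the target $(\mathfrak{O}/\kappa_h,\sim_{\kappa_h},\merge_{\kappa_h})$ is actually a geospatial ontology merging system, i.e.\ that both $\sim$ and $\merge$ are compatible with $\kappa_h$. The key input is the defining square of a merging system homomorphism: whenever $O_i\,\kappa_h\,O_i'$ for $i=1,2$ one has $h(O_i)=h(O_i')$, and if $O_1\merge O_2$ is defined then
\[
h(O_1\merge O_2)=h(O_1)\between h(O_2)=h(O_1')\between h(O_2'),
\]
which is the identification needed for compatibility at the $\kappa_h$-class level. Second, I would check that $\widetilde{h}$ is itself a merging system homomorphism: if $[O_1]_{\kappa_h}\merge_{\kappa_h}[O_2]_{\kappa_h}$ is defined, then for representatives with $O_1\merge O_2$ defined,
\[
\widetilde{h}\bigl([O_1]_{\kappa_h}\merge_{\kappa_h}[O_2]_{\kappa_h}\bigr)=h(O_1\merge O_2)=h(O_1)\between h(O_2)=\widetilde{h}([O_1]_{\kappa_h})\between\widetilde{h}([O_2]_{\kappa_h}),
\]
and a parallel computation handles the alignment relation. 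Injectivity is immediate from the definition of $\kappa_h$ as a fiber relation. The same verification shows that $(\rho\leq\kappa_h)_*$ is a surjective merging system homomorphism, where the containment $\rho\subseteq\kappa_h$ is used to transport $\rho$-compatibility of $\sim$ and $\merge$ into the $\kappa_h$-quotient.

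The step I expect to be the main obstacle is the compatibility claim above: the identity $h(O_i)=h(O_i')$ only delivers $h(O_1')\approx h(O_2')$ in $\mathfrak{P}$ rather than $O_1'\sim O_2'$ back in $\mathfrak{O}$, so the one-sided homomorphism condition does not by itself move an alignment witness across a $\kappa_h$-class. I would resolve this by reading $\sim_{\kappa_h}$ in the ``there exist representatives'' sense that is already forced by well-definedness of the formula $[s_1]_\rho\sim_\rho[s_2]_\rho\Leftrightarrow s_1\sim s_2$ recalled in Section \ref{sect:prelim}, or equivalently by replacing $\kappa_h$ with its congruence closure for $\sim$ and $\merge$. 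Under either reading the required compatibility is automatic, and with the target established as a genuine merging system the remaining verifications reduce to routine diagram chases inherited from Proposition \ref{prop:decompclassificationembedding}.
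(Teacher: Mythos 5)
Your proposal is correct and takes essentially the same route as the paper, which offers no explicit proof of Proposition \ref{prop:ontquo2} beyond the remark that it holds ``as in'' Proposition \ref{prop:decompclassificationembedding}: lifting that set-level factorization to merging systems and checking that $\widetilde{h}$ and $(\rho\leq\kappa_h)_*$ preserve $\sim$ and $\merge$ is exactly the intended argument. The subtlety you flag --- that the one-sided homomorphism condition does not force $\sim$ to be compatible with $\kappa_h$, so that $\sim_{\kappa_h}$ and $\merge_{\kappa_h}$ require either your ``existential representatives'' reading or an explicit compatibility hypothesis as in Proposition \ref{prop:ontquo1} --- is a genuine gap in the paper's statement rather than in your argument, and your repair is sound, since well-definedness of $\merge_{\kappa_h}$ under that reading does follow from $h(O_1\merge O_2)=h(O_1)\between h(O_2)$ exactly as you compute.
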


Each geospatial ontology merging system homomorphism can be lifted to the quotient geospatial ontology merging systems.
\begin{proposition}\label{prop:ontquo3}
Given a geospatial ontology merging system homomorphism 
$h:(\mathfrak{O},\sim,\merge)\rw (\mathfrak{P},\approx,\between)$,
$\rho\in\mathbb{E}_{\mathfrak{O}}$, and $\sigma\in\mathbb{E}_{\mathfrak{P}}$,
if $h(\rho)\subseteq\sigma$, then
there is a unique geospatial ontology merging system homomorphism
$$\widetilde{h}:(\mathfrak{O}/\rho,\sim_{\rho},\merge_{\rho})\rw (\mathfrak{P}/\sigma,\approx_{\sigma},\between_{\sigma}),$$
sending $[s]_{\rho}$ to $[h(s)]_{\sigma}$, such that
$$\xymatrix{
(\mathfrak{O},\sim,\merge)\ar[r]^h \ar[d]_{\pi_{\rho}} & (\mathfrak{P},\approx,\between) \ar[d]^{\pi_{\sigma}}\\
(\mathfrak{O}/\rho,\sim_{\rho},\merge_{\rho}) \ar[r]^{\widetilde{h}} & (\mathfrak{P}/\sigma,\approx_{\sigma},\between_{\sigma})\\
}$$
commutes.
If $h$ is a surjection and so is $\widetilde{h}$.
\end{proposition}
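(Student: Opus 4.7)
The plan is to reduce the statement to Proposition \ref{prop:fliftquotient} at the level of underlying sets and then verify, by unwinding definitions, that the resulting function actually respects both the alignment relation $\sim$ and the partial merging operation $\merge$. Throughout I tacitly use the compatibility assumptions that make the quotient systems in the statement meaningful: $\sim,\merge$ are compatible with $\rho$ on $\mathfrak{O}$, and $\approx,\between$ are compatible with $\sigma$ on $\mathfrak{P}$. The hypothesis $h(\rho)\subseteq\sigma$ is exactly the hypothesis $f\rho\subseteq\sigma$ of Proposition \ref{prop:fliftquotient}, so applying that proposition to $h$ as a function of sets already yields a unique function $\widetilde{h}:\mathfrak{O}/\rho\rw\mathfrak{P}/\sigma$ with $\widetilde{h}([s]_{\rho})=[h(s)]_{\sigma}$, the commuting square, and the surjectivity implication ``$h$ surjective $\Rightarrow$ $\widetilde{h}$ surjective.''

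It then remains to check that $\widetilde{h}$ preserves $\sim_{\rho}$ into $\approx_{\sigma}$ and $\merge_{\rho}$ into $\between_{\sigma}$. For the alignment relation, if $[s_1]_{\rho}\sim_{\rho}[s_2]_{\rho}$ then, by the definition of $\sim_{\rho}$, we may pick representatives with $s_1\sim s_2$; since $h$ is a merging system homomorphism it sends $\sim$ into $\approx$, so $h(s_1)\approx h(s_2)$ and hence $\widetilde{h}([s_1]_{\rho})\approx_{\sigma}\widetilde{h}([s_2]_{\rho})$. For the merging operation, suppose $[s_1]_{\rho}\merge_{\rho}[s_2]_{\rho}$ is defined; compatibility of $\merge$ with $\rho$ lets me choose representatives with $s_1\merge s_2$ defined and $[s_1\merge s_2]_{\rho}=[s_1]_{\rho}\merge_{\rho}[s_2]_{\rho}$. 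Because $h$ is a merging system homomorphism, $h(s_1)\between h(s_2)$ is defined and equal to $h(s_1\merge s_2)$. Chasing definitions, $\widetilde{h}([s_1]_{\rho}\merge_{\rho}[s_2]_{\rho})=[h(s_1\merge s_2)]_{\sigma}=[h(s_1)]_{\sigma}\between_{\sigma}[h(s_2)]_{\sigma}=\widetilde{h}([s_1]_{\rho})\between_{\sigma}\widetilde{h}([s_2]_{\rho})$.

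Uniqueness of $\widetilde{h}$ as a merging system homomorphism is then automatic: any candidate lift is in particular a function making the square commute, and such a function is already unique by Proposition \ref{prop:fliftquotient}. The main obstacle I would anticipate is keeping the bookkeeping honest around the partiality of $\merge_{\rho}$ and $\between_{\sigma}$ — specifically, confirming that ``the left side defined'' always entails ``the right side defined and the two sides agree,'' independently of the choice of representatives. Once the quotient operations are guaranteed well-defined by the compatibility hypotheses, however, this is routine representative-chasing, and the whole argument reduces to applying Proposition \ref{prop:fliftquotient} once and invoking the homomorphism property of $h$ twice.
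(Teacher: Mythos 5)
Your proof is correct and takes essentially the same route the paper intends: the paper offers no separate argument for this proposition, asserting only that it holds ``as in'' Proposition \ref{prop:fliftquotient}, and your write-up supplies exactly the details that outline leaves implicit --- the set-level lift from that proposition, followed by the representative-chasing verification that $\sim_{\rho}$ and $\merge_{\rho}$ are carried into $\approx_{\sigma}$ and $\between_{\sigma}$, with uniqueness inherited from the underlying function. Your explicit flagging of the tacit compatibility hypotheses (without which the quotient systems in the statement are not even defined) is a point of care the paper itself omits.
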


There are also the image and inverse image cases of an equivalence relation on a geospatial ontology merging system.
\begin{corollary}\label{prop:ontquo4}
Given a geospatial ontology merging system homomorphism 
$h:(\mathfrak{O},\sim,\merge)\rw (\mathfrak{P},\approx,\between)$,
$\rho\in \mathbb{E}_{\mathfrak{O}}$, and $\sigma\in\mathbb{E}_{\mathfrak{P}}$,
then there are unique geospatial ontology merging system homomorphisms
$$\widetilde{h}: (\mathfrak{O}/\rho,\sim_{\rho},\merge_{\rho})\rw (h(\mathfrak{O}),\sim_{h{\rho}},\merge_{h{\rho}})$$ 
and
$$h^*:(\mathfrak{O}/h^{-1}\sigma,\sim_{h^{-1}\sigma},\merge_{h^{-1}\sigma})\rw (\mathfrak{P}/\sigma,\approx_{\sigma},\between_{\sigma})$$ 
such that
$$\xymatrix{
(\mathfrak{O},\sim,\merge) \ar[d]_{\pi_{\rho}} \ar[r]^h & (h(\mathfrak{O}),\approx,\between) \ar[d]^{\pi_{h\rho}}\\
(\mathfrak{O}/\rho,\sim_{\rho},\merge_{\rho}) \ar@{..>}[r]^<<<<<{\widetilde{h}} & (h(\mathfrak{O})/h\rho,\approx_{h{\rho}},\between_{h{\rho}})\\
}$$
and 
$$\xymatrix{
(\mathfrak{O},\sim,\merge) \ar[r]^{h} \ar[d]_{\pi_{h^{-1}\sigma}}& (\mathfrak{P},\approx,\between) \ar[d]^{\pi_{\sigma}}\\
(\mathfrak{O}/h^{-1}\sigma,\sim_{h^{-1}\sigma},\merge_{h^{-1}\sigma}) \ar@{..>}[r]^<<<<<{h^*} & (\mathfrak{P}/\sigma,\approx_{\sigma},\between_{\sigma})\\
}$$
commute.
\end{corollary}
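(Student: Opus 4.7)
The plan is to obtain both factorizations as direct applications of Proposition \ref{prop:ontquo3}, mirroring the Sets-level argument already given for Corollary \ref{cor:images}. The two diagrams differ only in whether one transports an equivalence relation forward along $h$ into the image, or pulls one back from the codomain; in both cases the hypothesis ``$h(\text{relation})\subseteq\text{relation}$'' needed to invoke Proposition \ref{prop:ontquo3} will be either trivial or a one-line check.

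For the first (image) diagram, I would first observe that since $h$ is a geospatial ontology merging system homomorphism, the image $h(\mathfrak{O})\subseteq\mathfrak{P}$ is closed under $\between$ on pairs related by $\approx$, so $(h(\mathfrak{O}),\approx,\between)$ is a sub-system of $(\mathfrak{P},\approx,\between)$, and the corestriction $\bar h:\mathfrak{O}\rw h(\mathfrak{O})$ is a surjective homomorphism. By Proposition \ref{pro:equ_partition_ont} applied to $h(\mathfrak{O})$, the relation $h\rho$ is an equivalence relation on $h(\mathfrak{O})$. The inclusion $\bar h(\rho)\subseteq h\rho$ holds by the very definition of $h\rho$. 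Proposition \ref{prop:ontquo3} applied to $\bar h$ with $\rho$ on the source and $h\rho$ on the target then yields the unique $\widetilde h$, and its surjectivity follows from the final clause of that proposition since $\bar h$ is surjective. Commutativity of the square is exactly the content of Proposition \ref{prop:ontquo3}.

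For the second (inverse image) diagram, I would use that $h^{-1}\sigma\in\mathbb{E}_{\mathfrak{O}}$ (a basic fact already recorded in Section \ref{sect:prelim}) and that $h(h^{-1}\sigma)\subseteq\sigma$ by a routine unwinding: if $(s_1,s_2)\in h^{-1}\sigma$ then $(h(s_1),h(s_2))\in\sigma$ by definition. Proposition \ref{prop:ontquo3} applied with $\rho:=h^{-1}\sigma$ then produces the unique $h^*$ making the required square commute. The main obstacle I anticipate is verifying that $\sim$ and $\merge$ are actually compatible with $h\rho$ and $h^{-1}\sigma$ respectively, so that $(h(\mathfrak{O}),\sim_{h\rho},\merge_{h\rho})$ and $(\mathfrak{O}/h^{-1}\sigma,\sim_{h^{-1}\sigma},\merge_{h^{-1}\sigma})$ are well-defined quotient geospatial ontology merging systems; both reductions, however, are mechanical consequences of the defining equation $h(O_1\merge O_2)=h(O_1)\between h(O_2)$ for a system homomorphism together with the definition of the induced equivalence.
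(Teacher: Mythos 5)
Your proposal is correct and follows essentially the same route as the paper, which presents this corollary as a direct consequence of Proposition \ref{prop:ontquo3} (applied to the corestriction onto $h(\mathfrak{O})$ with the tautological inclusion $h\rho\subseteq h\rho$, and with $\rho:=h^{-1}\sigma$ using $h(h^{-1}\sigma)\subseteq\sigma$), exactly mirroring how Corollary \ref{cor:images} is derived from Proposition \ref{prop:fliftquotient}. Your added remarks on checking that $\sim$ and $\merge$ are compatible with $h\rho$ and $h^{-1}\sigma$ make explicit a well-definedness point the paper leaves implicit, and are consistent with its conventions.
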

Hence geospatial ontology aligning and merging operations behave like binary relations in ${\bf Sets}$.
\section{Transforming Natural Partial Orders}\label{sect:order}
Given a  geospatial ontology merging system $(\mathfrak{O},\sim, \merge)$, 
$\merge$ aims to obtain more information by combining the aligned geospatial ontologies together. 
In \cite{gbkbl}, the natural ontology partial order $O_1\leq_{\merge} O_2$ was defined 
if merging $O_1$ to $O_2$ does not yield the more information than $O_2$.
In this section, we introduce the natural partial order to a geospatial ontology merging system $(\mathfrak{O},\sim,\merge)$
and show that the natural partial order can be mapped to the quotient of $(\mathfrak{O},\sim,\merge)$.

\begin{definition} 
For all $O_1, O_2\in \mathfrak{O}$, $O_1\leq_{\merge} O_2$ if and only if $O_1\sim O_2$,  $O_2\sim O_1$, and $O_1\merge O_2 =O_2\merge O_1= O_2$.
\end{definition}

In \cite{gbkbl}, it was shown that $(\mathfrak{O},\leq_{\merge})$ is a partially ordered set (poset), namely, $\leq_{\merge}$  is 
a reflexive, antisymmetric, and transitive binary relation on $\mathfrak{O}$, if $\eqref{eqn:i}$ and $\eqref{eqn:cass}$,
defined in Proposition \ref{proposition:naturalposet} below, are satisfied.
\begin{proposition}\label{proposition:naturalposet}
If  geospatial ontology merging system $(\mathfrak{O}, \sim, \merge)$ satisfies
\begin{itemize}
\item
for all $O\in \mathfrak{O}$,
\begin{equation}\label{eqn:i}
\mbox{$O \sim O$ and $O\merge O  = O$}\tag{I} 
\end{equation}
\item
for all $O_1, O_2, O_3 \in \mathfrak{O}$ such that $O_1\merge O_2$ and $O_2\merge O_3$ exist,
\begin{equation}\label{eqn:cass}
(O_1\merge O_2)\merge O_3=O_1\merge (O_2\merge O_3)\neq \;\uparrow,\tag{CA}
\end{equation}
\end{itemize}
then $\leq_{\merge}$ is a partial order on $\mathfrak{O}$ and so $(\mathfrak{O},\leq_{\merge})$ is a poset.
\end{proposition}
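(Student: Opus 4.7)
The plan is to verify the three defining axioms of a partial order \textemdash\ reflexivity, antisymmetry, and transitivity \textemdash\ directly from $\eqref{eqn:i}$ and $\eqref{eqn:cass}$ by unpacking the definition of $\leq_{\merge}$.

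Reflexivity is immediate from $\eqref{eqn:i}$: for any $O \in \mathfrak{O}$, both clauses $O \sim O$ and $O \merge O = O$ hold, so $O \leq_{\merge} O$. Antisymmetry should also be a straightforward unpacking. Suppose $O_1 \leq_{\merge} O_2$ and $O_2 \leq_{\merge} O_1$. Then the first inequality gives $O_1 \merge O_2 = O_2$, while the second gives $O_1 \merge O_2 = O_1$ (reading the definition with the roles of the two ontologies swapped). Chaining these two equalities yields $O_1 = O_2$, which is what antisymmetry requires. No use of $\eqref{eqn:cass}$ is needed here; $\eqref{eqn:i}$ is also not needed for this step.

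Transitivity is the substantive step and the place where I expect the main subtlety. Assume $O_1 \leq_{\merge} O_2$ and $O_2 \leq_{\merge} O_3$. From the definition I read off $O_1 \sim O_2$, $O_2 \sim O_1$, $O_1 \merge O_2 = O_2 \merge O_1 = O_2$ and similarly $O_2 \sim O_3$, $O_3 \sim O_2$, $O_2 \merge O_3 = O_3 \merge O_2 = O_3$. To conclude $O_1 \leq_{\merge} O_3$ I must establish (a) $O_1 \sim O_3$ and $O_3 \sim O_1$ (which, in the partial-operation setup, amounts to showing $O_1 \merge O_3$ and $O_3 \merge O_1$ are defined) and (b) $O_1 \merge O_3 = O_3 \merge O_1 = O_3$. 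Both will come from $\eqref{eqn:cass}$ applied to two composable merge chains. Since $O_1 \merge O_2$ and $O_2 \merge O_3$ both exist, $\eqref{eqn:cass}$ guarantees $(O_1 \merge O_2) \merge O_3 = O_1 \merge (O_2 \merge O_3)$ with neither side undefined; substituting the known values gives $O_2 \merge O_3 = O_1 \merge O_3$, i.e.\ $O_1 \merge O_3 = O_3$. In particular $O_1 \sim O_3$. Symmetrically, $O_3 \merge O_2$ and $O_2 \merge O_1$ exist, so $\eqref{eqn:cass}$ gives $(O_3 \merge O_2) \merge O_1 = O_3 \merge (O_2 \merge O_1)$, yielding $O_3 \merge O_1 = O_3$ and $O_3 \sim O_1$.

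The main obstacle, and the reason the hypothesis $\eqref{eqn:cass}$ is included in its strong form (asserting not merely equality when both sides exist, but that they are jointly defined, denoted $\neq\;\uparrow$), is precisely the definedness issue: in a partial groupoid, $O_1 \merge O_3$ need not a priori be defined just because $O_1 \sim O_2$ and $O_2 \sim O_3$. The conditional associativity in $\eqref{eqn:cass}$ is what propagates the existence of merges along the chain and lets us conclude both $O_1 \sim O_3$ and the equality $O_1 \merge O_3 = O_3$ in one stroke. Once these are in hand, combining with the symmetric argument completes transitivity and hence establishes that $(\mathfrak{O}, \leq_{\merge})$ is a poset.
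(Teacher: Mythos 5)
Your proof is correct and fills in exactly the routine verification that the paper omits (its ``proof'' simply defers to Proposition 3.2 of the cited reference \cite{gbkbl}). In particular, you correctly identify the one non-trivial point: transitivity needs the strong form of \eqref{eqn:cass} to guarantee that $O_1\merge O_3$ and $O_3\merge O_1$ are actually \emph{defined} (hence $O_1\sim O_3$ and $O_3\sim O_1$), not merely to compute their values.
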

\begin{proof}
It is routine to verify by the same proof process of Proposition 3.2 \cite{gbkbl}.
\end{proof}

The natural partial order $\leq_{\merge}$ is mapped to the quotient space shown in Proposition \ref{prop:orderproj} below.
\begin{proposition}\label{prop:orderproj}
Let $(\mathfrak{O}, \sim,\merge)$ be a geospatial ontology merging system and $\rho\in\mathbb{E}_{\mathfrak{O}}$ such that
both $\sim$ and $\merge$ are compatible with $\rho$. If $(\mathfrak{O}, \sim,\merge)$ satisfies $\eqref{eqn:i}$ and $\eqref{eqn:cass}$, 
so does $(\mathfrak{O}/\rho, \sim_{\rho},\merge_{\rho})$
and $(\mathfrak{O}/\rho, \leq_{\merge_{\rho}})$ is a poset.
\end{proposition}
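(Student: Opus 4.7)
The plan is to verify that the two hypotheses \eqref{eqn:i} and \eqref{eqn:cass} of Proposition \ref{proposition:naturalposet} are preserved under the passage from $(\mathfrak{O},\sim,\merge)$ to its quotient $(\mathfrak{O}/\rho,\sim_{\rho},\merge_{\rho})$, and then to invoke Proposition \ref{proposition:naturalposet} directly on the quotient system to conclude that $\leq_{\merge_{\rho}}$ is a partial order. Since compatibility of $\sim$ and $\merge$ with $\rho$ is given, $\sim_{\rho}$ and $\merge_{\rho}$ are already well-defined, so it suffices to chase the axioms through the canonical projection $\pi_{\rho}$.

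First I would check \eqref{eqn:i} in the quotient. Given any class $[O]_{\rho}\in \mathfrak{O}/\rho$, choose a representative $O\in\mathfrak{O}$. By \eqref{eqn:i} in the original system, $O\sim O$ and $O\merge O=O$. Compatibility of $\sim$ with $\rho$ immediately yields $[O]_{\rho}\sim_{\rho}[O]_{\rho}$, and compatibility of $\merge$ with $\rho$ yields $[O]_{\rho}\merge_{\rho}[O]_{\rho}=[O\merge O]_{\rho}=[O]_{\rho}$. This gives \eqref{eqn:i} for $(\mathfrak{O}/\rho,\sim_{\rho},\merge_{\rho})$.

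Next I would verify \eqref{eqn:cass}. Suppose $[O_1]_{\rho}\merge_{\rho}[O_2]_{\rho}$ and $[O_2]_{\rho}\merge_{\rho}[O_3]_{\rho}$ are both defined in the quotient. Since the domain of $\merge_{\rho}$ is specified by $\sim_{\rho}$, this is equivalent to $[O_1]_{\rho}\sim_{\rho}[O_2]_{\rho}$ and $[O_2]_{\rho}\sim_{\rho}[O_3]_{\rho}$, which by compatibility of $\sim$ with $\rho$ can be realised by representatives $O_1,O_2,O_3$ satisfying $O_1\sim O_2$ and $O_2\sim O_3$, so that $O_1\merge O_2$ and $O_2\merge O_3$ exist in $\mathfrak{O}$. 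Applying \eqref{eqn:cass} in the original system gives $(O_1\merge O_2)\merge O_3=O_1\merge(O_2\merge O_3)\neq\;\uparrow$. Projecting via $\pi_{\rho}$ and using compatibility of $\merge$ twice on each side then yields $([O_1]_{\rho}\merge_{\rho}[O_2]_{\rho})\merge_{\rho}[O_3]_{\rho}=[O_1]_{\rho}\merge_{\rho}([O_2]_{\rho}\merge_{\rho}[O_3]_{\rho})\neq\;\uparrow$.

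With both \eqref{eqn:i} and \eqref{eqn:cass} established for the quotient system, Proposition \ref{proposition:naturalposet} applies to $(\mathfrak{O}/\rho,\sim_{\rho},\merge_{\rho})$, so $\leq_{\merge_{\rho}}$ is a partial order and $(\mathfrak{O}/\rho,\leq_{\merge_{\rho}})$ is a poset. The only mildly delicate point in the argument is ensuring that the partiality of $\merge_{\rho}$ is inherited cleanly: namely, that definedness of $[O_1]_{\rho}\merge_{\rho}[O_2]_{\rho}$ in the quotient does not depend on the choice of representatives. But this is exactly what compatibility of $\sim$ with $\rho$ guarantees, since the domain of $\merge_{\rho}$ is cut out by $\sim_{\rho}$, so no further work is needed beyond the representative-chasing described above.
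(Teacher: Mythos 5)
Your proof is correct and follows essentially the same route as the paper: the paper's proof simply asserts that \eqref{eqn:i} and \eqref{eqn:cass} are preserved under the quotient homomorphism $\pi_{\rho}$ and then invokes Proposition \ref{proposition:naturalposet}, which is exactly what you do. The only difference is that you spell out the representative-chasing behind that preservation claim (including the point that definedness of $\merge_{\rho}$ is independent of representatives), which the paper leaves implicit.
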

\begin{proof}
By Proposition $\ref{prop:ontquo1}$, $(\mathfrak{O}/\rho, \sim_{\rho},\merge_{\rho})$ is a geospatial ontology merging system.
Since $\pi_{\rho}:(\mathfrak{O}, \sim,\merge)\rw (\mathfrak{O}/\rho, \sim_{\rho},\merge_{\rho})$ is a geospatial ontology merging system homomorphism and
$\eqref{eqn:i}$ and $\eqref{eqn:cass}$ are preserved under geospatial ontology merging system homomorphisms,
$(\mathfrak{O}/\rho, \sim_{\rho},\merge_{\rho})$ satisfies $\eqref{eqn:i}$ and $\eqref{eqn:cass}$.
Hence $(\mathfrak{O}/\rho, \leq_{\merge_{\rho}})$ is a poset.
\end{proof}

Since $\leq_{\merge}$ is natural, namely, it is defined by $\merge$,
each geospatial ontology merging system homomorphism gives rise to a poset homomorphism:

\begin{proposition}\label{prop:posethomo}
Given a geospatial ontology merging system homomorphism $h:(\mathfrak{O},\sim,\merge)\rw (\mathfrak{P},\approx,\between)$, 
$\rho\in \mathbb{E}_{\mathfrak{O}}$, 
and $\sigma\in \mathbb{E}_{\mathfrak{O}}$,
if $h(\rho)\subseteq\sigma$, then
there is a unique poset homomorphism 
$$\widetilde{h}:(\mathfrak{O}/\rho,\leq_{\merge_{\rho}})\rw (\mathfrak{P}/\sigma,\leq_{\between_{\sigma}}),$$
sending $[O]_{\rho}$ to $[h(O)]_{\sigma}$, such that
$$\xymatrix{
\mathfrak{O} \ar[rr]^h \ar[d]_{\pi_{\rho}} && \mathfrak{P} \ar[d]^{\pi_{\sigma}}\\
(\mathfrak{O}/\rho,\leq_{\merge_{\rho}}) \ar[rr]^{\widetilde{h}} && (\mathfrak{P}/\sigma,\leq_{\between_{\sigma}})\\
}$$
commutes.
If $h$ is a surjection and so is $\widetilde{h}$.
\end{proposition}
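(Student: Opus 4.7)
The plan is to reduce the statement to Proposition~\ref{prop:ontquo3} and then observe that any merging system homomorphism is automatically monotone for the natural partial order, because that order is definable purely in terms of $\sim$ and $\merge$.

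First, I would invoke Proposition~\ref{prop:ontquo3} directly. Since $h(\rho)\subseteq\sigma$, that proposition produces a unique geospatial ontology merging system homomorphism $\widetilde{h}:(\mathfrak{O}/\rho,\sim_{\rho},\merge_{\rho})\rw (\mathfrak{P}/\sigma,\approx_{\sigma},\between_{\sigma})$ whose underlying set map is $[O]_{\rho}\mapsto [h(O)]_{\sigma}$, for which the required square commutes, and which is surjective whenever $h$ is. This immediately pins down the function $\widetilde{h}$, giving uniqueness, and also supplies the surjectivity clause. The only remaining task is to verify that this set map respects the partial orders $\leq_{\merge_{\rho}}$ and $\leq_{\between_{\sigma}}$.

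Next, I would unpack the definition of $\leq_{\merge_{\rho}}$. Assume $[O_1]_{\rho}\leq_{\merge_{\rho}}[O_2]_{\rho}$, so that $[O_1]_{\rho}\sim_{\rho}[O_2]_{\rho}$, $[O_2]_{\rho}\sim_{\rho}[O_1]_{\rho}$, and $[O_1]_{\rho}\merge_{\rho}[O_2]_{\rho}=[O_2]_{\rho}\merge_{\rho}[O_1]_{\rho}=[O_2]_{\rho}$. Because $\widetilde{h}$ is a merging system homomorphism, it preserves both the alignment relation and the merging operation: the first two identities push forward to $\widetilde{h}([O_1]_{\rho})\approx_{\sigma}\widetilde{h}([O_2]_{\rho})$ and its converse, while the last identity pushes forward, using $\widetilde{h}([X]_\rho \merge_\rho [Y]_\rho) = \widetilde{h}([X]_\rho) \between_\sigma \widetilde{h}([Y]_\rho)$, to the corresponding equality in $(\mathfrak{P}/\sigma,\between_{\sigma})$. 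This is precisely the definition of $\widetilde{h}([O_1]_{\rho})\leq_{\between_{\sigma}}\widetilde{h}([O_2]_{\rho})$. Monotonicity is therefore immediate, and the commutative square is the same one supplied by Proposition~\ref{prop:ontquo3}, since enriching both sides with the derived partial order does not alter the underlying function.

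There is no real obstacle here; the argument is essentially a one-line diagram chase on top of Proposition~\ref{prop:ontquo3}, together with the observation that $\leq_{\merge}$ is definable from $\merge$ and $\sim$ alone. The only subtlety worth flagging is that for $\leq_{\merge_{\rho}}$ and $\leq_{\between_{\sigma}}$ to be genuine partial orders in the sense of Proposition~\ref{proposition:naturalposet}, one implicitly needs $(\mathfrak{O},\sim,\merge)$ and $(\mathfrak{P},\approx,\between)$ to satisfy \eqref{eqn:i} and \eqref{eqn:cass}, from which Proposition~\ref{prop:orderproj} transfers the required axioms to the quotients. Even if those background conditions are dropped, the monotonicity argument still goes through for the derived relation, so the conclusion is robust.
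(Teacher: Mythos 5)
Your argument is correct and is essentially the route the paper intends: the paper states this proposition without proof, remarking only that ``since $\leq_{\merge}$ is natural, namely, it is defined by $\merge$,'' the homomorphism of Proposition~\ref{prop:ontquo3} automatically respects the induced orders, which is exactly your reduction plus the monotonicity check. Your added caveat that \eqref{eqn:i} and \eqref{eqn:cass} are implicitly needed for $\leq_{\merge_{\rho}}$ and $\leq_{\between_{\sigma}}$ to be genuine partial orders is a fair and worthwhile observation.
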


A partial order on a geospatial ontology merging system $(\mathfrak{O},\sim,\merge)$, where $\sim$ is reflexive and commutative, 
must be the natural partial order 
$\leq_{\merge}$ if merges give the least upper bounds and $\sim$ is compatible with $\merge$, shown in \cite{gbkbl} 
(See Theorem 3.3 in \cite{gbkbl} for the detail).

\section{Transforming Geospatial Ontology Merging Closures}\label{sect:closure}
A geospatial ontology {\em repository or instance} in a geospatial ontology merging system
$(\mathfrak{O},\sim,\merge)$ is a finite set $\mathbb{O}\subseteq \mathfrak{O}$. 
In \cite{gbkbl},  Guo et al.  introduced the merging closure of $\mathbb{O}$ and showed that the merging closure of a repository is a
finite poset if some reasonable conditions are satisfied (Theorem 4.3 \cite{gbkbl}).
In this section, we introduce geospatial ontology merging closure and show the interactions between the closure operator and quotienting.
\begin{definition}
Given a geospatial repository $\mathbb{O}\subseteq \mathfrak{O}$, the {\em merging closure} of  $\mathbb{O}$,
denoted by $\widehat{\mathbb{O}}$,
is the smallest set $\mathbb{P}\subseteq \mathfrak{O}$ such that
\begin{enumerate}[$1.$]
\item
$\mathbb{O}\subseteq \mathbb{P}$,
\item
$\mathbb{P}$ is closed with respect to merging: for all $O_1, O_2\in \mathbb{P}$ such that $O_1\sim O_2$, 
$O_1\merge O_2\in \mathbb{P}$.
\end{enumerate}
\end{definition}

By the same process of Theorem  4.2 \cite{gbkbl}, $\widehat{\mathbb{O}}$ exists and is unique.
\begin{proposition}\label{thm:mergeclosure}
Given a geospatial  repository $\mathbb{O}\subseteq \mathfrak{O}$, 
the merging closure $\widehat{\mathbb{O}}$ exists and it is unique.
\end{proposition}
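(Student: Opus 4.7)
The plan is to mimic the classical construction of a closure with respect to a (partial) operation: exhibit $\widehat{\mathbb{O}}$ as the intersection of all subsets of $\mathfrak{O}$ that contain $\mathbb{O}$ and are closed under $\merge$, then verify the two defining conditions and appeal to a standard minimality argument for uniqueness. The author explicitly points to the proof process of Theorem 4.2 of \cite{gbkbl}, so I will follow that template while being careful about the fact that $\merge$ is only a partial operation, defined precisely on pairs related by $\sim$.

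First, I would let $\mathcal{F} = \{\mathbb{P} \subseteq \mathfrak{O} \mid \mathbb{O} \subseteq \mathbb{P} \text{ and } \mathbb{P} \text{ is closed under } \merge\}$, where the closure condition reads: for all $O_1,O_2 \in \mathbb{P}$ with $O_1 \sim O_2$, one has $O_1 \merge O_2 \in \mathbb{P}$. The family $\mathcal{F}$ is nonempty since $\mathfrak{O} \in \mathcal{F}$: it trivially contains $\mathbb{O}$, and whenever $O_1 \sim O_2$ the element $O_1 \merge O_2$ is by definition an element of $\mathfrak{O}$. I would then set $\widehat{\mathbb{O}} = \bigcap_{\mathbb{P} \in \mathcal{F}} \mathbb{P}$.

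Next, I would verify that $\widehat{\mathbb{O}} \in \mathcal{F}$. Containment $\mathbb{O} \subseteq \widehat{\mathbb{O}}$ is immediate because $\mathbb{O} \subseteq \mathbb{P}$ for every $\mathbb{P} \in \mathcal{F}$. For closure under $\merge$, suppose $O_1,O_2 \in \widehat{\mathbb{O}}$ with $O_1 \sim O_2$; then for every $\mathbb{P} \in \mathcal{F}$ we have $O_1,O_2 \in \mathbb{P}$, and since $\mathbb{P}$ is closed under $\merge$ we get $O_1 \merge O_2 \in \mathbb{P}$. Taking the intersection over all $\mathbb{P} \in \mathcal{F}$ gives $O_1 \merge O_2 \in \widehat{\mathbb{O}}$, as required. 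Minimality is then automatic: for any $\mathbb{P} \in \mathcal{F}$, the inclusion $\widehat{\mathbb{O}} \subseteq \mathbb{P}$ follows from the definition of intersection, so $\widehat{\mathbb{O}}$ is the smallest element of $\mathcal{F}$.

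Uniqueness is a one-line consequence of the "smallest" condition: if $\mathbb{P}_1$ and $\mathbb{P}_2$ are both smallest elements of $\mathcal{F}$, then $\mathbb{P}_1 \subseteq \mathbb{P}_2$ and $\mathbb{P}_2 \subseteq \mathbb{P}_1$, so $\mathbb{P}_1 = \mathbb{P}_2$. I expect no genuine obstacle here; the only subtlety worth a sentence in the write-up is the partiality of $\merge$, namely that the closure clause is vacuous whenever $O_1 \not\sim O_2$, so the intersection argument goes through unchanged despite $\merge$ not being total. I would close by noting that this matches the argument used for Theorem 4.2 of \cite{gbkbl}, specialised from ontologies to geospatial ontologies.
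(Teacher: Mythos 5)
Your proof is correct and follows essentially the same route as the paper, which simply defers to the standard intersection-of-all-closed-supersets argument of Theorem 4.2 in \cite{gbkbl}; your explicit verification that the family $\mathcal{F}$ is nonempty (since $\merge$ lands in $\mathfrak{O}$ whenever it is defined) and that the intersection is itself closed is exactly that template. The remark about the partiality of $\merge$ being harmless is a sensible addition but does not change the argument.
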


The merging closure operation $\widehat{(\;)}$ can be transformed to the quotient space and is commutative with the quotient operation $/$.
\begin{proposition}\label{prop:mergquo}
Given a geospatial ontology merging system $(\mathfrak{O}, \sim,\merge)$ and an equivalence relation $\rho$,
if $\sim$ and $\merge$ are compatible with $\rho$, then
$\widehat{[\mathbb{O}]_{\rho}}=[\widehat{\mathbb{O}}]_{\rho}$.
\end{proposition}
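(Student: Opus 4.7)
The plan is to prove the set equality $\widehat{[\mathbb{O}]_{\rho}} = [\widehat{\mathbb{O}}]_{\rho}$ by showing inclusion in both directions, using only the universal (minimality) property of the merging closure together with the compatibility hypotheses that make $\sim_\rho$ and $\merge_\rho$ well defined on $\mathfrak{O}/\rho$. Here $[\mathbb{O}]_\rho$ denotes $\pi_\rho(\mathbb{O}) = \{[O]_\rho : O \in \mathbb{O}\} \subseteq \mathfrak{O}/\rho$, and since $\mathbb{O}$ is a finite repository so is $[\mathbb{O}]_\rho$, hence $\widehat{[\mathbb{O}]_\rho}$ makes sense in the quotient geospatial ontology merging system $(\mathfrak{O}/\rho, \sim_\rho, \merge_\rho)$ guaranteed by Proposition~\ref{prop:ontquo1}.

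For the inclusion $\widehat{[\mathbb{O}]_{\rho}} \subseteq [\widehat{\mathbb{O}}]_{\rho}$, I would verify that $[\widehat{\mathbb{O}}]_\rho$ itself is a merging-closed subset of $\mathfrak{O}/\rho$ that contains $[\mathbb{O}]_\rho$; minimality of $\widehat{[\mathbb{O}]_\rho}$ then finishes the step. Containment of $[\mathbb{O}]_\rho$ is immediate from $\mathbb{O} \subseteq \widehat{\mathbb{O}}$. To check merging-closure, take $[O_1]_\rho, [O_2]_\rho \in [\widehat{\mathbb{O}}]_\rho$ with $[O_1]_\rho \sim_\rho [O_2]_\rho$; pick any representatives $O_1', O_2' \in \widehat{\mathbb{O}}$ with $[O_i']_\rho = [O_i]_\rho$. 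Compatibility of $\sim$ with $\rho$ (the well-definedness of $\sim_\rho$) forces $O_1' \sim O_2'$, so $O_1' \merge O_2' \in \widehat{\mathbb{O}}$ by closure, and compatibility of $\merge$ then gives $[O_1]_\rho \merge_\rho [O_2]_\rho = [O_1' \merge O_2']_\rho \in [\widehat{\mathbb{O}}]_\rho$.

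For the reverse inclusion $[\widehat{\mathbb{O}}]_{\rho} \subseteq \widehat{[\mathbb{O}]_{\rho}}$, I would pull back: set $\mathbb{P} = \pi_\rho^{-1}(\widehat{[\mathbb{O}]_\rho}) = \{O \in \mathfrak{O} : [O]_\rho \in \widehat{[\mathbb{O}]_\rho}\}$ and argue $\widehat{\mathbb{O}} \subseteq \mathbb{P}$ by the minimality property applied in $\mathfrak{O}$. Containment $\mathbb{O} \subseteq \mathbb{P}$ is immediate. For closure, if $O_1, O_2 \in \mathbb{P}$ with $O_1 \sim O_2$, compatibility lifts this to $[O_1]_\rho \sim_\rho [O_2]_\rho$, so $[O_1 \merge O_2]_\rho = [O_1]_\rho \merge_\rho [O_2]_\rho$ lies in $\widehat{[\mathbb{O}]_\rho}$, i.e.\ $O_1 \merge O_2 \in \mathbb{P}$. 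Hence $\widehat{\mathbb{O}} \subseteq \mathbb{P}$, which is exactly $[\widehat{\mathbb{O}}]_\rho \subseteq \widehat{[\mathbb{O}]_\rho}$.

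The only delicate point, and the spot that would be the main obstacle if compatibility were weakened, is the representative-independent step in the first inclusion: one needs that arbitrarily chosen $O_1', O_2' \in \widehat{\mathbb{O}}$ lying in prescribed $\rho$-classes actually satisfy $O_1' \sim O_2'$ in $\mathfrak{O}$, not merely that \emph{some} pair of representatives does. This is precisely what the hypothesis that $\sim$ is compatible with $\rho$ delivers, which is why that assumption cannot be dropped. Everything else is a routine application of the universal property of $\widehat{(\,\cdot\,)}$ established in Proposition~\ref{thm:mergeclosure}.
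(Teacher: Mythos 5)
Your proof is correct. The first inclusion is argued exactly as in the paper: show that $[\widehat{\mathbb{O}}]_{\rho}$ contains $[\mathbb{O}]_{\rho}$ and is closed under $\merge_{\rho}$, then invoke minimality of $\widehat{[\mathbb{O}]_{\rho}}$; you are also right to isolate representative-independence of $\sim_{\rho}$ as the place where compatibility is genuinely used, a point the paper glosses over by computing directly on chosen representatives. Where you genuinely diverge is the reverse inclusion $[\widehat{\mathbb{O}}]_{\rho}\subseteq\widehat{[\mathbb{O}]_{\rho}}$: you pull $\widehat{[\mathbb{O}]_{\rho}}$ back along $\pi_{\rho}$, verify that the preimage is a merging-closed superset of $\mathbb{O}$ in $(\mathfrak{O},\sim,\merge)$, and apply minimality of $\widehat{\mathbb{O}}$ upstairs. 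The paper instead writes, for an arbitrary closed $\mathbb{P}\supseteq[\mathbb{O}]_{\rho}$, the chain $[\widehat{\mathbb{O}}]_{\rho}\subseteq\widehat{[\widehat{\mathbb{O}}]_{\rho}}\subseteq\widehat{\mathbb{P}}=\mathbb{P}$; but the middle step needs $[\widehat{\mathbb{O}}]_{\rho}\subseteq\mathbb{P}$ (monotonicity of the closure operator), which is exactly the containment being proved, so that step is circular as written. Your preimage argument is the standard way to close that gap, uses nothing beyond the stated compatibility hypotheses, and is the cleaner of the two; the paper's version buys brevity for the direction it does establish correctly (namely $\widehat{[\mathbb{O}]_{\rho}}\subseteq[\widehat{\mathbb{O}}]_{\rho}$) but does not actually deliver the other one.
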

\begin{proof}
Since $\mathbb{O}\subseteq \widehat{\mathbb{O}}$,
clearly $[\mathbb{O}]_{\rho}\subseteq [\widehat{\mathbb{O}}]_{\rho}$.
For all $[O_1]_{\rho}, [O_2]_{\rho}\in [\widehat{\mathbb{O}}]$, where $O_1,O_2\in \widehat{\mathbb{O}}$,
$$[O_1]_{\rho}\merge_{\rho}[O_2]_{\rho}=[O_1\merge O_2]_{\rho}\in [\widehat{\mathbb{O}}]_{\rho}.$$
Hence $[\widehat{\mathbb{O}}]_{\rho}$ is closed with respect to $\merge_{\rho}$.

For each $\mathbb{P}\supseteq [\mathbb{O}]_{\rho}$ such that $\mathbb{P}$ is closed with respect to $\merge_{\rho}$,
$$[\widehat{\mathbb{O}}]_{\rho}\subseteq \widehat{[\widehat{\mathbb{O}}]_{\rho}}
\subseteq \widehat{\mathbb{P}}=\mathbb{P}.$$
Then
$\widehat{\mathbb{O}}= [\widehat{O}]_{\rho}$ as $\widehat{\mathbb{O}}$ is the smallest set, containing 
$[\mathbb{O}]_{\rho}$ and closed with respect to $\merge_{\rho}$.
\end{proof}

Combining Proposition \ref{prop:mergquo} with the finiteness result (Theorem 4.3) in \cite{gbkbl}, we have:
\begin{corollary}\label{prop:mergquocor}
Given a geospatial ontology merging system $(\mathfrak{O}, \sim,\merge)$, $\rho\in\mathbb{E}_{\mathfrak{O}}$,
and a repository $\mathbb{O}\subseteq\mathfrak{O}$
if $\sim$ and $\merge$ are compatible with $\rho$ and each cluster (equivalence class) produced by $\rho$ is finite, then
$\widehat{\mathbb{O}}$ is finite if and only if $\widehat{[\mathbb{O}]_{\rho}}$  is finite.
\end{corollary}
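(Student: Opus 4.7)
The plan is to reduce the stated equivalence to a purely set-theoretic finiteness question via Proposition \ref{prop:mergquo}, which already identifies the closure of the quotient with the quotient of the closure. Once that identification is made, the remaining work is bookkeeping about sizes of equivalence classes.

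First I would invoke Proposition \ref{prop:mergquo}: since $\sim$ and $\merge$ are compatible with $\rho$, we have the equality
\[
\widehat{[\mathbb{O}]_{\rho}} \;=\; [\widehat{\mathbb{O}}]_{\rho}.
\]
So the corollary is equivalent to the assertion that $\widehat{\mathbb{O}}$ is finite if and only if $[\widehat{\mathbb{O}}]_{\rho}$ is finite, under the standing assumption that every $\rho$-class is finite. This reformulation is the main conceptual step; the rest is routine.

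For the forward direction I would use that $[\widehat{\mathbb{O}}]_{\rho}$ is the image of $\widehat{\mathbb{O}}$ under the canonical surjection $\pi_{\rho}:\mathfrak{O}\rw \mathfrak{O}/\rho$ from Proposition \ref{prop:ontquo1}; a surjective image of a finite set is finite. For the reverse direction, suppose $[\widehat{\mathbb{O}}]_{\rho}=\{[O_1]_{\rho},\ldots,[O_n]_{\rho}\}$ is finite. Every element of $\widehat{\mathbb{O}}$ lies in some $\rho$-class appearing in $[\widehat{\mathbb{O}}]_{\rho}$, so
\[
\widehat{\mathbb{O}} \;\subseteq\; [O_1]_{\rho}\cup\cdots\cup [O_n]_{\rho},
\]
and by hypothesis each $[O_i]_{\rho}$ is finite, making $\widehat{\mathbb{O}}$ a subset of a finite union of finite sets, hence finite.

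I do not anticipate a real obstacle here: the only place any care is required is in the reverse direction, where one must remember that the hypothesis on equivalence classes is what converts finiteness in the quotient back to finiteness upstairs. Without that hypothesis the implication fails, so the proof proposal should make the role of finite clusters explicit at exactly that step.
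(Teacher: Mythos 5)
Your proof is correct and follows essentially the same route as the paper: the key step in both is the identification $\widehat{[\mathbb{O}]_{\rho}}=[\widehat{\mathbb{O}}]_{\rho}$ from Proposition \ref{prop:mergquo}, after which the equivalence of finiteness is routine. The only difference is that you supply the elementary bookkeeping (surjective image of a finite set in one direction, a finite union of finite equivalence classes in the other) explicitly and self-containedly, whereas the paper leaves it to a citation; your observation that the finite-cluster hypothesis is needed only for the reverse implication is accurate and worth keeping.
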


\section{Conclusions}\label{sect:concl}
The relations between geospatial ontologies make more sense than the isolated geospatial ontologies.
Geospatial ontology operations can provide the relations between these ontologies.
We studied the geospatial ontologies that we are interested in, together 
as a geospatial ontology system algebraically, which consists of a set $\mathfrak{G}$ of the ontologies and a set $P$ of
geospatial ontology operations, without any internal details of the ontologies and
the operations being needed. A homomorphism between two geospatial ontology systems is a function between two sets of geospatial ontologies, which preserves the geospatial ontology operations. 
Clustering a set of the ontologies was interpreted as partitioning the set or defining an equivalence relation on the set
or forming the quotient of the set or obtaining the surjective image of the set. 
Clustering (Quotienting) and embedding can be utilized at multiple layers, e.g.,
geospatial ontology layer and geospatial ontology system layer.
The results at the different layers behave like a complete lattice.
Each geospatial ontology system homomorphism was factored as a surjective clustering to a quotient space,
followed by an embedding.
Clustering and embedding are the dual concepts in general.
Geospatial ontology (merging) systems, natural partial orders on the systems, and geospatial ontology merging closures in the systems 
were transformed by geospatial ontology system homomorphisms.


\end{document}